\def\BState{\State\hskip-\ALG@thistlm}
\newcommand{\argmin}{\operatornamewithlimits{argmin}}
\newtheorem{theorem}{Theorem}[section]
\newtheorem{lemma}[theorem]{Lemma}
\newtheorem{corollary}[theorem]{Corollary}
\newenvironment{proof}[1][Proof]{\begin{trivlist}
\item[\hskip \labelsep {\bfseries #1}]}{\end{trivlist}}
\newenvironment{definition}[1][Definition]{\begin{trivlist}
\item[\hskip \labelsep {\bfseries #1}]}{\end{trivlist}}
\newcommand\Algphase[1]{%
\vspace*{-.7\baselineskip}\Statex\hspace*{\dimexpr-\algorithmicindent-2pt\relax}\rule{0.5\textwidth}{0.4pt}%
\Statex\hspace*{-\algorithmicindent}\textbf{#1}%
\vspace*{-.7\baselineskip}\Statex\hspace*{\dimexpr-\algorithmicindent-2pt\relax}\rule{0.5\textwidth}{0.4pt}%
}
\icmltitlerunning{Feature-Budgeted Random Forest}
\begin{document}

\twocolumn[
\icmltitle{Feature-Budgeted Random Forest}

\icmlauthor{Feng Nan}{fnan@bu.edu}
\icmladdress{Boston University}
\icmlauthor{Joseph Wang}{joewang@bu.edu}
\icmladdress{Boston University}
\icmlauthor{Venkatesh Saligrama}{srv@bu.edu}
\icmladdress{Boston University}

\icmlkeywords{feature-budgeted learning, decision trees}

\vskip 0.3in
]

\begin{abstract}
We seek decision rules for {\it prediction-time cost reduction}, where complete data is available for training, but during prediction-time, each feature can only be acquired for an additional cost. We propose a novel random forest algorithm to minimize prediction error for a user-specified {\it average} feature acquisition budget. While random forests yield strong generalization performance, they do not explicitly account for feature costs and furthermore require low correlation among trees, which amplifies costs. Our random forest grows trees with low acquisition cost and high strength based on greedy minimax cost-weighted-impurity splits. Theoretically, we establish near-optimal acquisition cost guarantees for our algorithm. Empirically, on a number of benchmark datasets we demonstrate superior accuracy-cost curves against state-of-the-art prediction-time algorithms.
\end{abstract}

\section{Introduction}
In many applications such as surveillance and retrieval, we acquire measurements for an entity, and features for a query in order to make a prediction. Features can be expensive and complementary, namely, knowledge of previously acquired feature values often renders acquisition of another feature redundant. In these cases, the goal is to maximize prediction performance given a constraint on the average feature acquisition cost. Our proposed approach is to learn decision rules for prediction-time cost reduction~\cite{kanani_melville} from training data in which the full set of features and ground truth labels are available for training.

We propose a novel random forest learning algorithm to minimize prediction error for a user-specified {\it average} feature acquisition budget.
Random forests~\cite{Breiman} construct a collection of trees, wherein each tree is grown by random independent data sampling \& feature splitting, producing a collection of independent identically distributed trees. The resulting classifiers are robust, are easy to train, and yield strong generalization performance.

Although well suited to unconstrained supervised learning problems, applying random forests in the case of prediction-time budget constraints presents a major challenge. First, random forests do not account for feature acquisition costs. If two features have similar utility in terms of power to classify examples but have vastly different costs, random forest is just as likely to select the high cost feature as the low cost alternative. This is obviously undesirable. Second, a key element of random forest performance is the diversity amongst trees \cite{Breiman}. Empirical evidence suggest a strong connection between diversity and performance, and generalization error is bounded not only with respect to the strength of individual trees but also the correlation between trees \cite{Breiman}. High diversity amongst trees constructed without regard for acquisition cost results in trees using a wide range of features, and therefore a high acquisition cost (See Section \ref{sec:experiments}).

Thus, ensuring a low acquisition cost on the forest hinges on growing each tree with high discriminative power and low acquisition cost. To this end, we propose to learn decision trees that incorporates feature acquisition cost. Our random forest grows trees based on greedy minimax cost-weighted-impurity splits. Although the problem of learning decision trees with optimally low-cost is computationally intractable, we show that our greedy approach outputs trees whose cost is closely bounded with respect to the optimal cost. Using these low cost trees, we construct random forests with high classification performance and low prediction-time feature acquisition cost.

Abstractly, our algorithm attempts to solve an empirical risk minimization problem subject to a budget constraint. At each step in the algorithm, we add low-cost trees to the random forest to reduce the empirical risk until the budget constraint is met. The resulting random forest adaptively acquires features during prediction time, with features only acquired when used by a split in the tree. 
In summary, our algorithm is greedy and easy to train. It can not only be parallelized, but also lends itself to distributed databases. Empirically, it does not overfit and has low generalization error. Theoretically, we can characterize the feature acquisition cost for each tree and for the random forest. Empirically, on a number of benchmark datasets we demonstrate superior accuracy-cost curves against state-of-the-art prediction-time algorithms.

\textbf{Related Work:} The problem of learning from full training data for prediction-time cost reduction~\cite{mackay,kanani_melville} has been extensively studied. One simple structure for incorporating costs into learning is through detection cascades \cite{viola01,zhang:2010,chen:2012}, where cheap features are used to discard examples belonging to the negative class. Different from our apporach these approaches require a fixed order of features to be acquired and do not generalize well to multi-class. Bayesian approaches have been proposed which model the system as a POMDP \cite{ji:2007,kapoor:2009,Gao+Koller:NIPS11}, however they require estimation of the underlying probability distributions. To overcome the need to estimate distributions, reinforcement learning \cite{karayev13,busa2012fast,dulac2011datum} and imitation learning \cite{he2012imitation} approaches have also been studied, where the reward or oracle action is predicted, however these generally require classifiers capable of operating on a wide range of missing feature patterns.

Supervised learning approaches with prediction-time budgets have previously been studied under an empirical risk minimization framework to learn budgeted decision trees \cite{xu2013cost,ASTC_AAAI14,trapeznikov:2013b,wang2014lp,wang2014model}. In this setting, construction of budgeted decision cascades or trees has been proposed by learning complex decision functions at each node and leaf, outputting a tree of classifiers which adaptively select sensors/features to be acquired for each new example. Common to these systems is a decision structure, which is a priori fixed. The entire structure is parameterized by complex decision functions for each node, which are then optimized using various objective functions. In contrast we build a random forest of trees where each tree is grown greedily so that global collection of random trees meets the budget constraint.

Construction of simple decision trees with low costs has also been studied for discrete function evaluation problems \cite{DiagnosisDeterminationSimultaneous,MoshkovGreedyAlgorithmwithWeightsforDecisionTreeConstruction,GroupBasedActiveLearning}. Different from our work these trees operate on discrete data to minimize function evaluations, with no notion of test time prediction or cost.

As for Random forests despite their widespread use in supervised learning, to our knowledge they have not been applied to prediction-time cost reduction.

\section{Feature-Budgeted Random Forest}
We first present the general problem of learning under prediction-time budgets similar to the formulation in \cite{trapeznikov:2013b,wang2014lp}. Suppose example/label pairs $(x,y)$ are distributed as $(x,y)\stackrel{d}{\sim} H$. The goal is to learn a classifier $f$ from a family of functions $\mathcal{F}$ that minimizes expected loss subject to a budget constraint:
\begin{align}\label{eq:budgetedLearning}
\min_{f \in \mathcal{F}} E_{xy}\left[L(y,f(x))\right],\,\, \text{s.t. } E_x\left[C\left(f,x\right)\right]\leq B,
\end{align}
where $L(y,\hat{y})$ is a loss function, $C(f,x)$ is the cost of evaluating the function of $f$ on example $x$ and $B$ is a user specified budget constraint. In this paper, we assume that the feature acquisition cost $C(f,x)$ is a modular function of the support of the features used by function $f$ on example $x$, that is acquiring each feature has a fixed constant cost. Without the cost constraint, the problem is equivalent to a supervised learning problem, however, adding the cost constraint makes this a combinatorial problem \cite{xu2013cost}.
In practice, we are not given the distribution but instead are given a set of training data $(x_1,y_1),\ldots,(x_n,y_n)$ drawn IID with $(x_i,y_i)\stackrel{d}{\sim} H$. We can then minimize 
the empirical loss subject to a budget constraint:
\begin{align}\label{eq:budgetedLearning_erm}
\min_{f \in \mathcal{F}}\frac{1}{n}\sum_{i=1}^{n}L(y_i,f(x_i)),\,\, \text{s.t. } \frac{1}{n}\sum_{i=1}^{n}C\left(f,x_i\right)\leq B.
\end{align}
In our context the classifier $f$ is a random forest, ${\cal T}$, consisting of $K$ random trees, $D_1,\, D_2,\,\ldots,D_K$, that are learnt on training data. Consequently, the expected cost for an instance $x$ during prediction-time can be written as follows:
\begin{align} \label{eq:costbound}
E_f \left[E_x\left[C\left(f,x\right)\right]\right ] \leq  \sum_{j=1}^K E_{D_j}\left[ E_x\left[C\left(D_j,x\right)\right]\right ] 
\end{align}
where, in the RHS we are averaging with respect to the random trees. As the trees in a random forest are identically distributed the RHS scales with the number of trees.
This upper-bound captures the typical behavior of a random forest due to the low feature correlation among trees.

As a result of this observation, the problem of learning a budgeted random forest can be viewed as equivalent to the problem of finding decision trees with low expected evaluation cost and error. This motivates our algorithm \textsc{BudgetRF}, where greedily constructed decision trees with provably low feature acquisition cost are added until the budget constraint is met according to validation data. The returned random forest is  a feasible solution to \eqref{eq:budgetedLearning} with strong empirical performance.



\subsection{Our Algorithm}
\textbf{During Training:} As shown in Algorithm \ref{algo:BudgetRF}, there are seven inputs to \textsc{BudgetRF}: impurity function $F$, prediction-time feature acquisition budget $B$, a cost vector $C\in \Re^m$ that contains the acquisition cost of each feature, training class labels $y_{tr}$ and data matrix $X_{tr} \in \Re^{n\times m}$,  where $n$ is the number of samples and $m$ is the number of features, validation class labels $y_{tv}$ and data matrix $X_{tv}$. Note that the impurity function $F$ needs to be \emph{admissible}, which essentially means monotone and supermodular. We defer the formal definition and theoretical results to Section \ref{sec:greedyBound}. For now it is helpful to think of an impurity function $F$ as measuring the heterogeneity of a set of examples. Intuitively, $F$ is large for a set of examples with mostly different labels and small for a set with mostly the same label.

\begin{algorithm}
\caption{{\textbf{\textsc{BudgetRF}}}}\label{algo:BudgetRF}
\begin{algorithmic}[1]
\Procedure{BudgetRF($F,B,C,ytr,Xtr,ytv,Xtv$)}{}
\State $\mathcal{T} \gets \emptyset$.
\While{Average cost using validation set on $\mathcal{T}$ $\leq B$}
\State Randomly sample $n$ training data with replacement to form $X^{(i)}$ and $y^{(i)}$.
\State Train $T \gets$ {\textsc{GreedyTree}}($F,C,y^{(i)},X^{(i)}$).
\State $\mathcal{T} \gets \mathcal{T} \cup T$.
\EndWhile
\State \Return $\mathcal{T}\backslash T$.
\EndProcedure
\Algphase{Subroutine - \textsc{GreedyTree}}
\Procedure{GreedyTree($F,C,y,X$)}{}
\State $S\gets (y,X)$ \Comment{the current set of examples}
\If {$F(S)=0$} \Return
\EndIf
\For {each feature $t = 1$ to $m$}
\State Compute
$R(t):= \underset{g_t\in \mathcal{G}_t}{\min}\underset{i\in \text{outcomes}}{\max} \frac{c(t)}{F(S)-F(S^i_{g_t})}$, \Comment{risk for feature $t$} \label{eq:risk}
\State where $S^i_{g_t}$ is the set of examples in $S$ that has outcome $i$ using classifier $g_t$ with feature $t$.
\EndFor
\State $\hat{t} \gets \argmin_t R(t)$
\State $\hat{g} \gets \underset{g_{\hat{t}} \in \mathcal{G}_{\hat{t}}}{\argmin} \underset{i\in \text{outcomes}}{\max} \frac{c(\hat{t})}{F(S)-F(S^i_{g_{\hat{t}}})}$
\State Make a node using feature $\hat{t}$ and classifier $\hat{g}$.
\For {each outcome $i$ of $\hat{g}$}
\State $\textsc{GreedyTree}(F,C,y^i_{\hat{g}},X^i_{\hat{g}})$ to append as child nodes.
\EndFor
\EndProcedure
\end{algorithmic}
\end{algorithm}

\textsc{BudgetRF} iteratively builds decision trees by calling \textsc{GreedyTree} as a subroutine on a sampled subset of examples from the training data until the budget $B$ is exceeded as evaluated using the validation data. The ensemble of trees are then returned as output.
 As shown in subroutine \textsc{GreedyTree}, the tree building process is greedy and recursive. If the given set of examples have zero impurity as measured by $F$, they are returned as a leaf node. Otherwise, compute the \emph{risk} $R(t)$ for each feature $t$, which involves searching for a classifier $g_t$ among the family of classifiers $\mathcal{G}_t$ that minimizes the maximum impurity among its outcomes. Intuitively, a feature with the least $R(t)$ can uniformly reduce the impurity among all its child nodes the most with the least cost. Therefore such a feature $\hat{t}$ is chosen along with the corresponding classifier $\hat{g}$. The set of examples are then partitioned using $\hat{g}$ to different child nodes at which \textsc{GreedyTree} is recursively applied. Note that we allow the algorithm to reuse the same feature for the same example in \textsc{GreedyTree}.

\textbf{During Prediction:} Given a test example and a decision forest $\mathcal{T}$ returned by \textsc{BudgetRF}, we run the example through each tree in $\mathcal{T}$ and obtained a predicted label from each tree. The final predicted label is simply the majority vote among all the trees.

Different from random forest, we incorporate feature acquisition costs in the tree building subroutine \textsc{GreedyTree} with the hope of reducing costs while maintaining low classification error. Our main theoretical contribution is to propose a broad class of \emph{admissible} impurity functions such that on any given set of $n'$ examples the tree constructed by \textsc{GreedyTree} will have max-cost bounded by $O(\log n')$ times the optimal max-cost tree.

%

\subsection{Bounding the Cost of Each Tree}\label{sec:greedyBound}
Given a set of examples $S$ with features and corresponding labels, a classification tree $D$ has a feature-classifier pair associated with each internal node. A test example is routed from the root of $D$ to a leaf node directed by the outcomes of the classifiers along the path; the test example is then labeled to be the majority class among training examples in the leaf node it reaches. The feature acquisition cost of an example $s\in S$ on $D$, denoted as $cost(D,s)$, is the sum of all feature costs incurred along the root-to-leaf path in $D$ traced by $s$. Note that if $s$ encounters a feature multiple times in the path, the feature cost contributes to $cost(D,s)$ only once because subsequent use of a feature already acquired for the test example incurs no additional cost.
We define the total max-cost as
$$Cost(D)=\underset{s\in S}{\max}  cost(D,s).$$ 
We aim to build a decision tree for any given set of examples such that the max-cost is minimized. Note that the max-cost criterion bounds the expected cost criterion of Eq.~\ref{eq:costbound}. While this bound could be loose we show later (see Sec.~\ref{sec:disc}) that by parameterizing a suitable class of impurity functions, the max-costs of our \textsc{GreedyTree} solution can be ``smoothened" so that it approaches the expected-cost.

First define the following terms: $n'$ is the number of examples input to {\textsc{GreedyTree}} and $m$ is the number of features, each of which has (a vector of) real values; $F$ is the given impurity function; $F(S)$ is the impurity on the set of examples $S$; $D_{F}$ is the family of decision trees with $F(L)=0$ for any of its leaf $L$;
each feature has a cost $c(t)$; a family of classifiers $\mathcal{G}_t$ is associated with feature $t$; $Cost_{F}(S)$ is the max-cost of the tree constructed by {\textsc{GreedyTree}} using impurity function $F$ on $S$;  and assume no feature is used more than once on the same example in the \emph{optimal} decision tree among $D_{F}$ that achieves the minimum max-cost, which we denote as $OPT(S)$ for the given input set of examples $S$. Note the assumption here is a natural one if the complexity of $\mathcal{G}_t$ is high enough.
We show the $O(\log n')$ approximation holds for the max-cost of the optimal testing strategy using the \textsc{GreedyTree} subroutine if the impurity function $F$ is admissible.
\begin{definition} 
A function $F$ of a set of examples is \emph{admissible} if it satisfies the following five properties: (1) Non-negativity: $F(G)\geq 0$ for any set of examples $G$; (2) Purity: $F(G)=0$ if $G$ consists of examples of the same class; (3) Monotonicity: $F(G)\geq  F(R), \forall R \subseteq G$;  (4) Supermodularty: $F(G\cup j)-F(G)\geq F(R\cup j) -F(R)$ for any $R\subseteq G$ and example $j\notin R$; (5) $\log(F(S))=O(\log n')$.
\end{definition}
Since the set $S$ is always finite, by scaling $F$ we can assume the smallest non-zero impurity of $F$ is 1.
Let $\tau$ and $\hat{g}_\tau$ be the first feature and classifier selected by {\textsc{GreedyTree}} at the root and let $S^i_{\hat{g}_\tau}$ be the set of examples in $S$ that has outcome $i$ using classifier $\hat{g}_\tau$.
Note the optimization of classifier in Line \eqref{eq:risk} of Algorithm \ref{algo:BudgetRF} needs not to be exact. We say {\textsc{GreedyTree}} is \emph{$\lambda$-greedy} if $\hat{g}_\tau$ is chosen such that
\begin{equation*}
\underset{i\in \text{outcomes}}{\max} \frac{c(\gamma)}{F(S)-F(S^i_{\hat{g}_\tau})} \leq \underset{g_t \in \mathcal{G}_{t}}{\min} \underset{i\in \text{outcomes}}{\max} \frac{\lambda c(t)}{F(S)-F(S^i_{g_{t}})},
\end{equation*}
for some constant $\lambda \geq 1$.
 By definition of max-cost,
\begin{equation*}
\frac{Cost_{F}(S)}{OPT(S)}\leq \frac{c(\tau)+\underset{i}{\max} Cost_{F}(S^i_{\hat{g}_\tau})}{OPT(S)},
\end{equation*}
because feature $\tau$ could be selected multiple times by \textsc{GreedyTree} along a path and the feature cost $c(\tau)$ contributes only once to the cost of the path.

Let $q$ be such that $Cost_F(S^q_{\hat{g}_\tau})=\underset{i}{\max} Cost_{F}(S^i_{\hat{g}_\tau})$. We first provide a lemma to lower bound the optimal cost, which will later be used to prove a bound on the cost of the tree.

\begin{lemma}\label{lemma:OPT_Wlowerbound}
Let $F$ be monotone and supermodular; let $\tau$ and $\hat{g}_\tau$ be the first feature and classifier chosen by {\textsc{GreedyTree}} $\lambda$-greedily on the set of examples $S$, then
\begin{equation*}
c(\tau)F(S)/(F(S)-F(S^q_{\hat{g}_\tau})) \leq \lambda OPT(S).
\end{equation*}
\end{lemma}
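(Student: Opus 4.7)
The plan is to compare the greedy choice $(\tau,\hat{g}_\tau)$ against features used along a carefully chosen path in an optimal tree $T^*$ realizing $OPT(S)$. I fix such a $T^*$ and descend from its root along what I will call the \emph{worst-impurity path}: with $S_0=S$, at the $j$-th node (whose classifier $g_j$ splits on feature $t_j$) I descend into the child whose example set has the largest $F$-value, so $S_j=(S_{j-1})^{q^*_j}_{g_j}$ with $q^*_j=\argmax_i F((S_{j-1})^i_{g_j})$, breaking ties toward nonempty children. Since $T^*\in D_F$ has pure leaves, after some $k$ steps $F(S_k)=0$, and since no feature is reused along a root-to-leaf path of $T^*$, $\sum_{j=1}^k c(t_j)$ equals the evaluation cost of any example in the nonempty $S_k$; hence $\sum_{j=1}^k c(t_j)\le OPT(S)$.

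The crucial step is a comparison that I would establish from supermodularity and monotonicity: for any classifier $g$ and any $S'\subseteq S$,
\[
F(S)-\max_i F(S^i_g)\ge F(S')-\max_i F((S')^i_g).
\]
I prove this outcome by outcome. Fix $i$ and let $X=S'\setminus (S')^i_g=S'\cap\{g\neq i\}$; then $X$ is disjoint from both of the nested bases $(S')^i_g\subseteq S^i_g$, since every $x\in X$ has $g(x)\neq i$ while both bases consist of examples with $g=i$. Iterating supermodularity on these nested bases gives $F(S^i_g\cup X)-F(S^i_g)\ge F((S')^i_g\cup X)-F((S')^i_g)=F(S')-F((S')^i_g)$, and monotonicity gives $F(S)\ge F(S^i_g\cup X)$. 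Chaining these bounds, then taking $\min_i$ (which preserves the pointwise inequality), yields the comparison. Applying it with $S'=S_{j-1}$ and $g=g_j$ gives $F(S)-\max_i F(S^i_{g_j})\ge F(S_{j-1})-F(S_j)$, and telescoping produces $\sum_{j=1}^k[F(S)-\max_i F(S^i_{g_j})]\ge F(S)-F(S_k)=F(S)$.

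Finally, the $\lambda$-greedy condition applied to each $(t_j,g_j)$ gives $c(\tau)/[F(S)-\max_i F(S^i_{\hat{g}_\tau})]\le \lambda c(t_j)/[F(S)-\max_i F(S^i_{g_j})]$, and since $F(S^q_{\hat{g}_\tau})\le \max_i F(S^i_{\hat{g}_\tau})$ one also has $c(\tau)/[F(S)-F(S^q_{\hat{g}_\tau})]\le c(\tau)/[F(S)-\max_i F(S^i_{\hat{g}_\tau})]$. Cross-multiplying and summing over $j=1,\ldots,k$ yields
\[
c(\tau)\sum_{j=1}^k\bigl[F(S)-\max_i F(S^i_{g_j})\bigr]\le \lambda\bigl[F(S)-F(S^q_{\hat{g}_\tau})\bigr]\sum_{j=1}^k c(t_j),
\]
and substituting the two sum bounds established above gives $c(\tau)F(S)\le \lambda[F(S)-F(S^q_{\hat{g}_\tau})]\,OPT(S)$, which rearranges to the stated inequality.

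The main obstacle is the comparison inequality: one must identify the right common set $X=S'\cap\{g\neq i\}$, verify its disjointness from both the large and the small ``$g=i$'' bases, apply supermodularity in the correct direction (adding to a larger base gives more), and check that the per-outcome inequality survives $\min_i$. Everything else is a telescoping sum, a single application of the $\lambda$-greedy definition per step, and an algebraic rearrangement.
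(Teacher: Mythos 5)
Your proof is correct and follows essentially the same route as the paper's: both lower-bound $OPT(S)$ by the cost of a single root-to-leaf path of the optimal tree, use monotonicity plus supermodularity to show that the impurity drop $F(S)-F(S^i_g)$ on the full set dominates the drop $F(S')-F((S')^i_g)$ on any subtree's example set $S'\subseteq S$, then invoke the $\lambda$-greedy condition at each path node and telescope the drops along the path. The only cosmetic difference is the rule selecting the path (you descend into the locally worst-impurity child, while the paper descends into the child maximizing $F(S^j_{g^*_{t_i}})$ computed on all of $S$), which does not change the argument.
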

\begin{proof}
Let $D^*\in D_{F}$ be a tree with optimal max-cost. Let $v$ be an arbitrarily chosen internal node in $D^*$, let $\gamma$ be the feature associated with $v$ and $g^*_\gamma$ the corresponding classifier. Let $R\subseteq S$ be the set of examples associated with the leaves of the subtree rooted at $v$. Let $i$ be such that $c(\tau)/(F(S)-F(S^i_{\hat{g}_\tau}))$ is maximized. Let $g^{min}_\gamma=\underset{g_\gamma \in \mathcal{G}_{\gamma}}{\argmin} \underset{i\in \text{outcomes}}{\max} \frac{c(\gamma)}{F(S)-F(S^i_{g_{\gamma}})}$. Let $w$ be such that $c(\gamma)/(F(S)-F(S^w_{g^{min}_\gamma}))$ is maximized; similarly let $j$ be such that $c(\gamma)/(F(S)-F(S^j_{g^{*}_\gamma}))$ is maximized. We then have:
\begin{align}
&\frac{c(\tau)}{F(S)-F(S^q_{\hat{g}_\tau})} \leq \frac{c(\tau)}{F(S)-F(S^i_{\hat{g}_\tau})} \notag \leq
\frac{\lambda c(\gamma)}{F(S)-F(S^w_{g^{min}_\gamma})} \\
&\leq
\frac{\lambda c(\gamma)}{F(S)-F(S^j_{g^{*}_\gamma})} \leq \frac{\lambda c(\gamma)}{F(R)-F(R^j_{g^{*}_\gamma})}.\label{eq:lemma2}
\end{align}
The first inequality follows from the definition of $i$. The second inequality follows from the $\lambda$-greedy choice at the root. The third inequality follows from the minimization over classifiers given feature $\gamma$. To show the last inequality, we have to show $F(S)-F(S^j_{g^{*}_\gamma})\geq F(R)-F(R^j_{g^{*}_\gamma})$. This follows from the fact that $S^j_{g^{*}_\gamma} \cup R \subseteq S$ and $R^j_{g^{*}_\gamma} = S^j_{g^{*}_\gamma} \cap R$ and therefore $F(S)\geq F(S^j_{g^{*}_\gamma} \cup R) \geq F(S^j_{g^{*}_\gamma})+F(R)-F(R^j_{g^{*}_\gamma})$, where the first inequality follows from monotonicity and the second follows from the definition of supermodularity.

For a node $v$, let $S(v)$ be the set of examples associated with the leaves of the subtree rooted at $v$. Let $v_1,v_2,\dots,v_p$ be a root-to-leaf path on $D^*$ as follows: $v_1$ is the root of the tree, and for each $i=1,\dots,p-1$ the node $v_{i+1}$ is a child of $v_i$ associated with the branch of $j$ that maximizes $c(t_i)/(F(S)-F(S^j_{g^*_{t_i}}))$, where $t_i$ is the test associated with $v_i$. It follows from \eqref{eq:lemma2} that
\begin{equation}
\frac{[F(S(v_i))-F(S(v_{i+1}))]c(\tau)}{\lambda(F(S)-F(S^q_{\hat{g}_\tau}))}\leq c_{t_i}.
\end{equation}
Since the cost of the path from $v_1$ to $v_p$ is no larger than the max cost of the $D^*$, we have that
\begin{align*}
& OPT(S) \geq \sum_{i=1}^{p-1}c_{t_i} \\
 & \geq \frac{c(\tau)}{\lambda(F(S)-F(S^q_{\hat{g}_\tau}))}\sum_{i=1}^{p-1}(F(S(v_i))-F(S(v_{i+1}))\\
&=\frac{c(\tau)(F(S)-F(S(v_p))}{\lambda(F(S)-F(S^q_{\hat{g}_\tau}))}=\frac{c(\tau)F(S)}{\lambda(F(S)-F(S^q_{\hat{g}_\tau}))}.
\end{align*}
\end{proof}
The main theorem of this section is the following.
\begin{theorem} \label{thm:logn}
{\textsc{GreedyTree}} constructs a decision tree achieving $O(\log n')$-factor approximation of the optimal max-cost in $D_{F}$ on the set $S$ of $n'$ examples if  $F$ is admissible and no feature is used more than once on any path of the optimal tree.
\end{theorem}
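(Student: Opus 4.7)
The plan is to trace the worst-cost root-to-leaf path in the tree that \textsc{GreedyTree} outputs, bound the cost incurred at each step of that path by a level-wise application of Lemma~\ref{lemma:OPT_Wlowerbound}, and then collapse the resulting sum via a telescoping logarithmic estimate. First I would fix notation: set $S_0:=S$ and, recursively, let $S_{k+1}$ be the child subset of $S_k$ whose subtree attains the max-cost in \textsc{GreedyTree}'s output (the ``$q$'' branch of the lemma applied at $S_k$), with $\tau_k$ the feature chosen by the algorithm at that node; terminate at depth $L$ where $F(S_L)=0$. Because re-acquiring a feature on a single path incurs no additional cost, iterating $Cost_F(S)\le c(\tau_0)+Cost_F(S_1)$ gives
\begin{equation*}
Cost_F(S) \;\le\; \sum_{k=0}^{L-1}c(\tau_k).
\end{equation*}

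Next I would apply Lemma~\ref{lemma:OPT_Wlowerbound} at each $S_k$ to obtain $c(\tau_k)\le \lambda\,OPT(S_k)\,[F(S_k)-F(S_{k+1})]/F(S_k)$, and then replace $OPT(S_k)$ by the global $OPT(S)$. For the latter I would show that if $D^*$ is an optimal tree for $S$, pruning every branch visited by no example of $S_k\subseteq S$ yields a tree in $D_F$ for $S_k$: any surviving leaf $L'$ is contained in a leaf $L$ of $D^*$, and monotonicity together with non-negativity of $F$ force $0\le F(L')\le F(L)=0$. The pruned tree's max-cost on $S_k$ is at most that of $D^*$ on $S$, so $OPT(S_k)\le OPT(S)$ and therefore
\begin{equation*}
Cost_F(S) \;\le\; \lambda\,OPT(S)\sum_{k=0}^{L-1}\frac{F(S_k)-F(S_{k+1})}{F(S_k)}.
\end{equation*}

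Finally I would bound this sum by $O(\log n')$. By the normalisation that the smallest non-zero value of $F$ equals $1$, we have $F(S_k)\ge 1$ for $k<L$ while $F(S_L)=0$, so the $k=L-1$ term contributes exactly $1$. For $k\le L-2$, the elementary inequality $1-r\le \ln(1/r)$, valid on $r\in(0,1]$, gives $[F(S_k)-F(S_{k+1})]/F(S_k)\le \ln(F(S_k)/F(S_{k+1}))$, and summing telescopes to $\ln(F(S)/F(S_{L-1}))\le \ln F(S)$; admissibility property~(5) then converts this into $O(\log n')$, yielding $Cost_F(S)\le \lambda\,OPT(S)\bigl(1+O(\log n')\bigr)$. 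The step I expect to be the main obstacle is the monotonicity claim $OPT(S_k)\le OPT(S)$, because it depends on verifying that pruning preserves membership in $D_F$ and that the pruned tree's max-cost over $S_k$ dominates the obvious restriction; the rest of the argument is careful accounting for the geometric-like decay of $F$ along the worst-cost path.
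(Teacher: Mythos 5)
Your proof is correct and is essentially the paper's argument unrolled: where the paper runs an induction with hypothesis $Cost_F(G)/OPT(G)\le\lambda\log(F(G))+1$, you telescope the same per-level bound (Lemma~\ref{lemma:OPT_Wlowerbound} at each node of the worst-cost path, $OPT(S_k)\le OPT(S)$, and the inequality $1-r\le\ln(1/r)$, which is the paper's $\tfrac{x}{x+1}\le\log(1+x)$ in disguise) into the sum $\lambda\,OPT(S)\bigl(1+\ln F(S)\bigr)=OPT(S)\cdot O(\log n')$. Your explicit pruning argument for $OPT(S_k)\le OPT(S)$ is a nice touch, since the paper only asserts that inequality.
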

\begin{proof}
This is an inductive proof:
\begin{align}
& \frac{Cost_{F}(S)}{OPT(S)} \leq \frac{c(\tau)+Cost_{F}(S^q_{\hat{g}_\tau})}{OPT(S)} \\
&\leq \frac{c(\tau)}{OPT(S)}+\frac{Cost_{F}(S^q_{\hat{g}_\tau})}{OPT(S^q_{\hat{g}_\tau})} \label{eq:thm1_1}
\\ & \leq \lambda \frac{F(S)-F(S^q_{\hat{g}_\tau})}{F(S)}+\frac{Cost_{F}(S^q_{\hat{g}_\tau})}{OPT(S^q_{\hat{g}_\tau})} \label{eq:thm1_2} \\
& \leq \lambda \log (\frac{F(S)}{F(S^q_{\hat{g}_\tau})})+\lambda \log (F(S^q_{\hat{g}_\tau}))+1 \label{eq:thm1_4}\\
& = \lambda \log (F(S))+1 =O(\log(n')). \label{eq:thm1_5}
\end{align}
The inequality in \eqref{eq:thm1_1} follows from the fact that $OPT(S) \geq OPT(S^q_{\hat{g}_\tau})$. \eqref{eq:thm1_2} follows from Lemma \ref{lemma:OPT_Wlowerbound}. The first term in \eqref{eq:thm1_4} follows from the inequality $\frac{x}{x+1} \leq \log(1+x)$ for $x>-1$ and the second term follows from the induction hypothesis that for each $G\subset S$, ${Cost_{F}(G)}/{OPT(G)}\leq \lambda \log(F(G))+1$. If $F(G)=0$ for some set of examples $G$, we define ${Cost_{F}(G)}/{OPT(G)}=1$.

We can verify the base case of the induction as follows.
if $F(G)=1$, which is the smallest non-zero impurity of $F$ on subsets of examples $S$, we claim that the optimal decision tree chooses the feature with the smallest cost among those that can reduce the impurity function $F$:
\begin{equation*}
OPT(G)=\min_{t|\exists g_t, \text{s.t. } F(G^i_{g_t})=0, \forall i\in \text{outcomes}} c(t).
\end{equation*}
Suppose otherwise, the optimal tree chooses first a feature $t$ with a child node $G'$ such that $F(G')=1$ and later chooses another feature $t'$ such that all the child nodes of $G'$ by $g_{t'}$ has zero impurity, then $t'$ could have been chosen in the first place to reduce all child nodes of $G$ to zero impurity by supermodularity of $F$.
On the other hand, $R(t)=\infty$ in {\textsc{GreedyTree}} for the features that cannot reduce impurity and $R(t)=c(t)$ for those features that can. So the algorithm would pick the feature among those that can reduce impurity and have the smallest cost. Thus, we have shown that ${Cost_{F}(G)}/{OPT(G)} = 1 \leq \lambda \log(F(G))+1$ for the base case.
\end{proof}

\subsection{Admissible Impurity Functions}
A wide range of functions falls into the class of admissible impurity functions. We employ a particular function called threshold-Pairs in our paper defined as
\begin{equation}\label{eq:hingedPairs}
F_\alpha(G)=\sum_{i\neq j}[[n^i_G-\alpha]_+[n^j_G-\alpha]_+-\alpha^2]_+,
\end{equation}
where $n_G^i$ denotes the number of objects in $G$ that belong to class $i$, $[x]_+=\max(x,0)$ and $\alpha$ is a threshold parameter.
We include the proof of the following lemma in the Appendix.
\begin{lemma}\label{lemma:F_admissible_multi}
$F_\alpha(G)$ is \emph{admissible}.
\end{lemma}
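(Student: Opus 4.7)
My plan is to verify the five clauses of admissibility in turn. Writing $h(a,b) := [[a-\alpha]_+[b-\alpha]_+-\alpha^2]_+$ so that $F_\alpha(G)=\sum_{i\neq j} h(n^i_G,n^j_G)$, four of the clauses are quick bookkeeping and one — supermodularity — carries the content. Non-negativity (1) is immediate from the outer $[\cdot]_+$. Purity (2) holds because if $G$ has only class $k$, then $n^i_G=0$ for every $i\neq k$, so $[n^i_G-\alpha]_+=0$ and every summand vanishes (using $\alpha\geq 0$). Monotonicity (3) follows because $a\mapsto[a-\alpha]_+$ is non-negative and non-decreasing, so $h$ is jointly non-decreasing in $(a,b)$, and each count $n^i_G$ is a non-decreasing set function. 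Finally, $F_\alpha(S)\leq\sum_{i\neq j} n^i_S n^j_S\leq(n')^2$, giving $\log F_\alpha(S)=O(\log n')$ as required by (5).

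For supermodularity (4), sums of supermodular set functions are supermodular, so it suffices to prove each $G\mapsto h(n^i_G,n^j_G)$ is supermodular. Fix $R\subseteq G$, $x\notin G$, and let $k$ be the class of $x$. If $k\notin\{i,j\}$ both marginal gains are zero. By symmetry assume $k=i$; the marginal at $G$ equals $h(n^i_G+1,n^j_G)-h(n^i_G,n^j_G)$ and analogously at $R$. Since $n^i_R\leq n^i_G$ and $n^j_R\leq n^j_G$, the desired inequality reduces to showing that the discrete difference map $(a,b)\mapsto h(a+1,b)-h(a,b)$ is non-decreasing in each argument.

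Monotonicity in $a$ is the convexity of $h(\cdot,b)$, which follows because $h(a,b)$ is obtained by multiplying the convex function $[a-\alpha]_+$ by the non-negative scalar $[b-\alpha]_+$, subtracting a constant, and composing with the convex map $[\cdot]_+$. Monotonicity in $b$ is exactly the (discrete) supermodularity of $h$ on $\mathbb{R}^2$. I will establish this by first showing that the auxiliary function $\phi(u,v):=[uv-\alpha^2]_+$ is supermodular on $[0,\infty)^2$, and then noting that $h(a,b)=\phi([a-\alpha]_+,[b-\alpha]_+)$ is the composition of $\phi$ with coordinate-wise monotone maps — a composition that preserves supermodularity because monotone maps commute with $\vee$ and $\wedge$. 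Supermodularity of $\phi$ reduces to checking that $u\mapsto\phi(u,v+t)-\phi(u,v)$ is non-decreasing for $t\geq 0$; its piecewise-constant derivative
\[
(v+t)\,\mathbb{1}[u(v+t)>\alpha^2]-v\,\mathbb{1}[uv>\alpha^2]
\]
is non-negative in each of the three regimes $u(v+t)\leq\alpha^2$, $uv\leq\alpha^2<u(v+t)$, and $uv>\alpha^2$ (yielding values $0$, $v+t$, and $t$ respectively), and $\phi$ is continuous, so the conclusion follows.

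The main obstacle is supermodularity, and within it the behavior near the boundary $\{uv=\alpha^2\}$ where $\phi$ transitions from $0$ to $uv-\alpha^2$; the three-region derivative argument above isolates and dispatches precisely that boundary. Once $\phi$ is known supermodular, the composition with monotone coordinate maps, together with the observation that adding an element of class $k$ only moves the $k$-coordinate of the count vector by one, lifts supermodularity from $\mathbb{R}^2$ to set functions summand-by-summand and hence to $F_\alpha$.
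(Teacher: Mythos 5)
Your proof is correct, and for the one substantive clause --- supermodularity --- it takes a genuinely different route from the paper's. The paper also reduces the multiclass case to per-summand supermodularity, but it then proves the binary case $[[n^1-\alpha]_+[n^2-\alpha]_+-\alpha^2]_+$ by a direct case analysis on which of $F_\alpha(R)$, $F_\alpha(R\cup j)$, $F_\alpha(G)$ are zero versus positive, computing the marginal gain explicitly as $n^2_G-\alpha$ whenever all brackets are active and checking the degenerate cases by monotonicity. You instead abstract the summand as $h(a,b)=\phi([a-\alpha]_+,[b-\alpha]_+)$ with $\phi(u,v)=[uv-\alpha^2]_+$, reduce set-function supermodularity to increasing differences of $h$ on the integer grid (correctly noting that adding one example moves exactly one coordinate of the count vector by one and that $n^i_R\le n^i_G$ componentwise), and then obtain increasing differences from convexity of $h(\cdot,b)$ in one direction and from supermodularity of $\phi$ composed with coordinatewise monotone maps in the other, with the three-regime derivative check handling the kink at $uv=\alpha^2$. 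Your lattice-theoretic argument is cleaner and more reusable --- it would cover any summand of the form ``monotone convex reparametrization of a supermodular $\phi$'' --- whereas the paper's case analysis is more elementary but involves somewhat delicate bookkeeping about which thresholds are active. One small point in your favor: you read the element being added as lying outside $G$ (not merely outside $R$), which is the reading under which supermodularity can actually hold and is clearly what the paper intends.
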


Neither entropy nor Gini index satisfies the notion of admissibility because they are not monotonic set functions, that is a subset of examples does not necessarily have a smaller entropy or Gini index compared to the entire set. Therefore traditional decision tree learning algorithms do not incorporate feature costs and have no guarantee on the max-cost as stated in our paper. We have studied more impurity functions that are admissible such as the polynomials and Powers family of functions. After conducting experiments on smaller datasets we noted that they do not offer significant advantage over the threshold-Pairs used in this paper. Please see Appendix for more details.

\subsection{Discussions of the Algorithm} \label{sec:disc}
Before concluding the \textsc{BudgetRF} algorithm and its analysis, we discuss further various design issues as well as their implications.

\textbf{Choice of threshold $\alpha$.}
In subroutine \textsc{GreedyTree}, each tree is greedily built until a minimum leaf impurity is met, then added to the random forest. The threshold $\alpha$ can be used to trade-off between average tree depth and number of trees. A lower $\alpha$ results in deeper trees with higher classification power and acquisition cost. As a result, fewer trees are added to the random forest before the budget constraint is met. Conversely, a higher $\alpha$ yields shallower trees with poorer classification performance, however due to the low cost of each tree, many are added to the random forest before the budget constraint is met. As such, $\alpha$ can be viewed as a bias-variance trade-off. In practice, it is selected using validation dataset.

Another observation we make is that the choice of $\alpha$ can potentially lead to different feature choice when used in \textsc{GreedyTree}.
To illustrate this point, consider the toy example in Figure \ref{fig:demo}. A set $G$ has 30 examples in class 1 (circles) and 30 examples in Class 2 (triangles). Two features $t_1$ and $t_2$ are available to the algorithm at equal cost. Feature $t_1$ has only one classifier in $\mathcal{G}_{t_1}$ as drawn on the upper left of the figure, which can separate 20 examples of Class 2 from the rest of the examples while $t_2$ has only one classifier in $\mathcal{G}_{t_2}$ as drawn on the lower left of the figure, which evenly divides the examples into halves with equal number of examples from Class 1 and Class 2 in either half. Intuitively, $t_2$ is not a useful feature from a classification point of view because it cannot separate examples based on class at all. This is reflected in the right plot of Figure \ref{fig:demo}: choosing $t_2$ increases cost but does not reduce classification error while choosing $t_1$ reduces the error to $\frac{1}{6}$. If $\alpha$ is set to 0 in the threshold-Pairs, feature $t_2$ will be chosen due to the fact that Pairs biases towards feature-classifiers with balanced outcomes. In contrast, setting $\alpha=8$ leads to feature $t_1$, and therefore may be preferable (see Appendix). 
\begin{figure}
\centering
\includegraphics[trim=2.5cm 5cm 3cm 0cm,angle=0,width=0.4\textwidth]{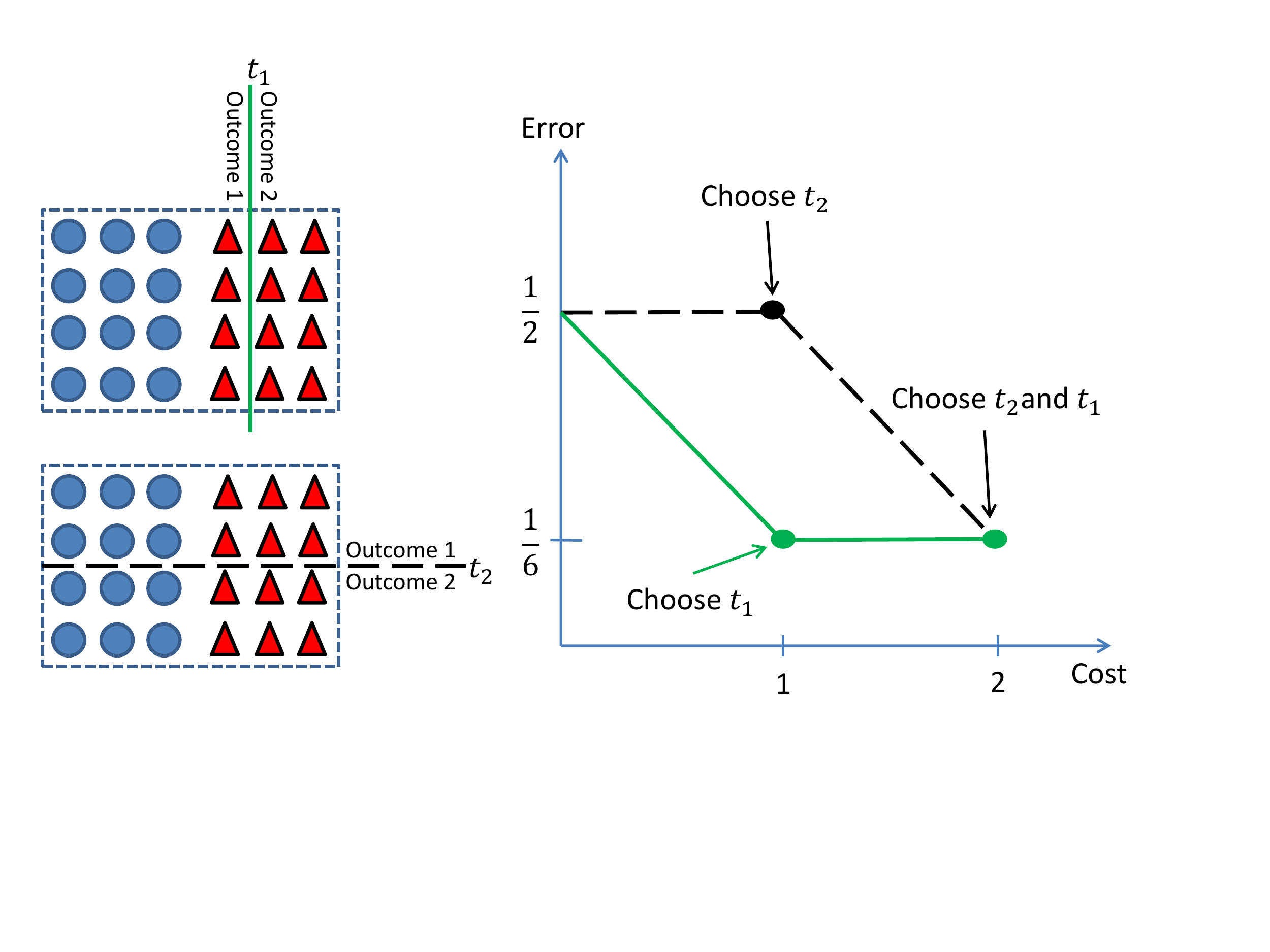}
\caption{Illustration of different $\alpha$ setting in threshold-Pairs for different greedy choice of features. The left two figures above show the feature-classifier outcomes of features $t_1$ and $t_2$. The right figure shows the classification error against cost (number of features). Here setting $\alpha=0$ leads to choosing $t_2$ because it prefers balanced splits; setting $\alpha=8$ leads to choosing $t_1$, which is better from an error-cost trade-off point of view.} \label{fig:demo}
\vspace{-0.1in}
\end{figure}

\textbf{Minimax-splits.}
The splitting criterion in the subroutine \textsc{GreedyTree} is based on the worst case impurity among child nodes, we call such splits minimax-splits as opposed to expected-splits, which is based on the expected impurity among child nodes. Using minimax-splits, our theoretical guarantee is a bound on the max-cost of individual trees. Note such minimax-splits have been shown to lead to expected-cost bound as well in the setting of GBS \cite{Nowak08generalizedbinary}; an interesting future research direction is to show whether minimax-splits can lead to a bound on the expected-cost of individual trees in our setting.

\textbf{Smoothened Max-Costs.}
We emphasize that by adjusting $\alpha$ in  threshold-Pairs function - essentially allowing some error, the max-costs of the \textsc{GreedyTree} solution can be ``smoothened" so that it approaches the expected-cost.
Consider the synthetic example as shown in Figure \ref{fig:synth}.
\begin{figure}
\centering
\includegraphics[trim=1.7cm 6cm 1.7cm 4cm,clip,width=0.35\textwidth]{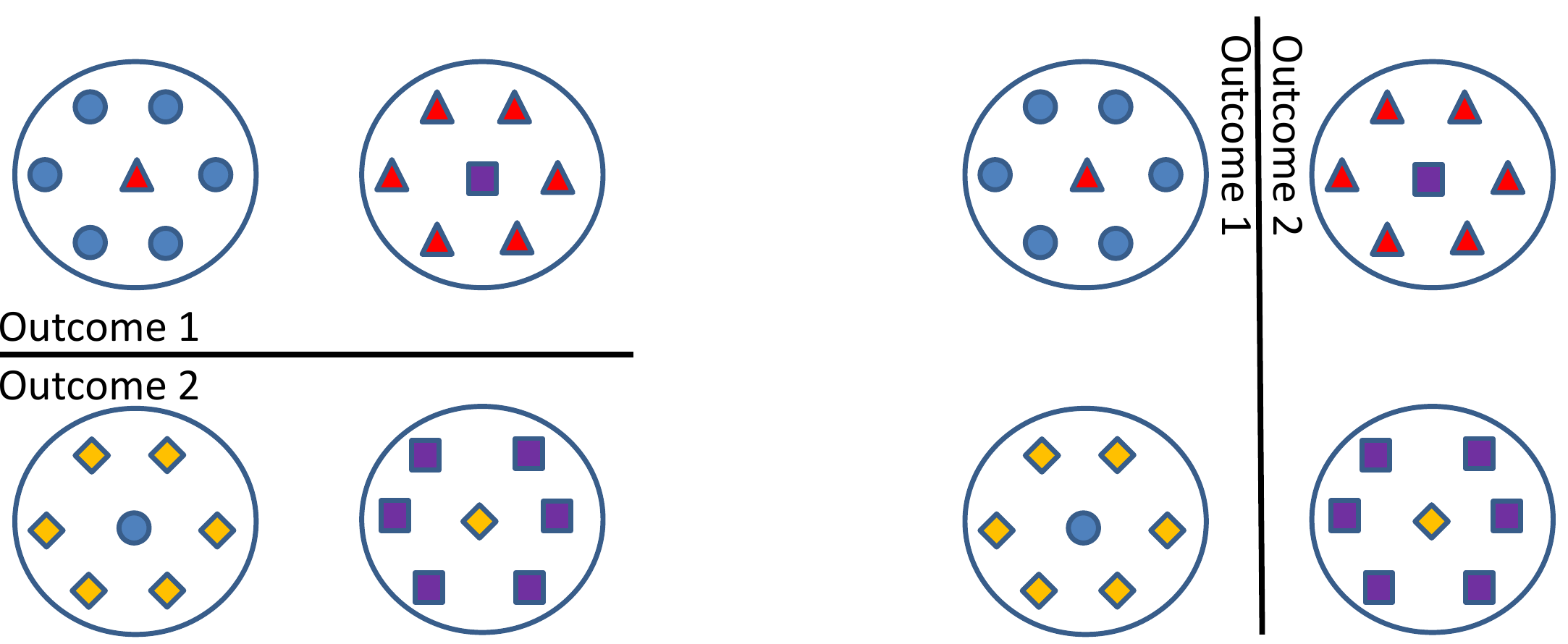}
\caption{A synthetic example to show max-cost of \textsc{GreedyTree} can be ``smoothened" to approach the expected-cost. The left and right figures above show the classifier outcomes of feature $t_1$ and $t_2$, respectively. } \label{fig:synth}
\vspace{-.5cm}
\end{figure}
\begin{figure}
\centering
\includegraphics[trim=5.75cm 4cm 6cm 4cm,clip,width=0.3\textwidth]{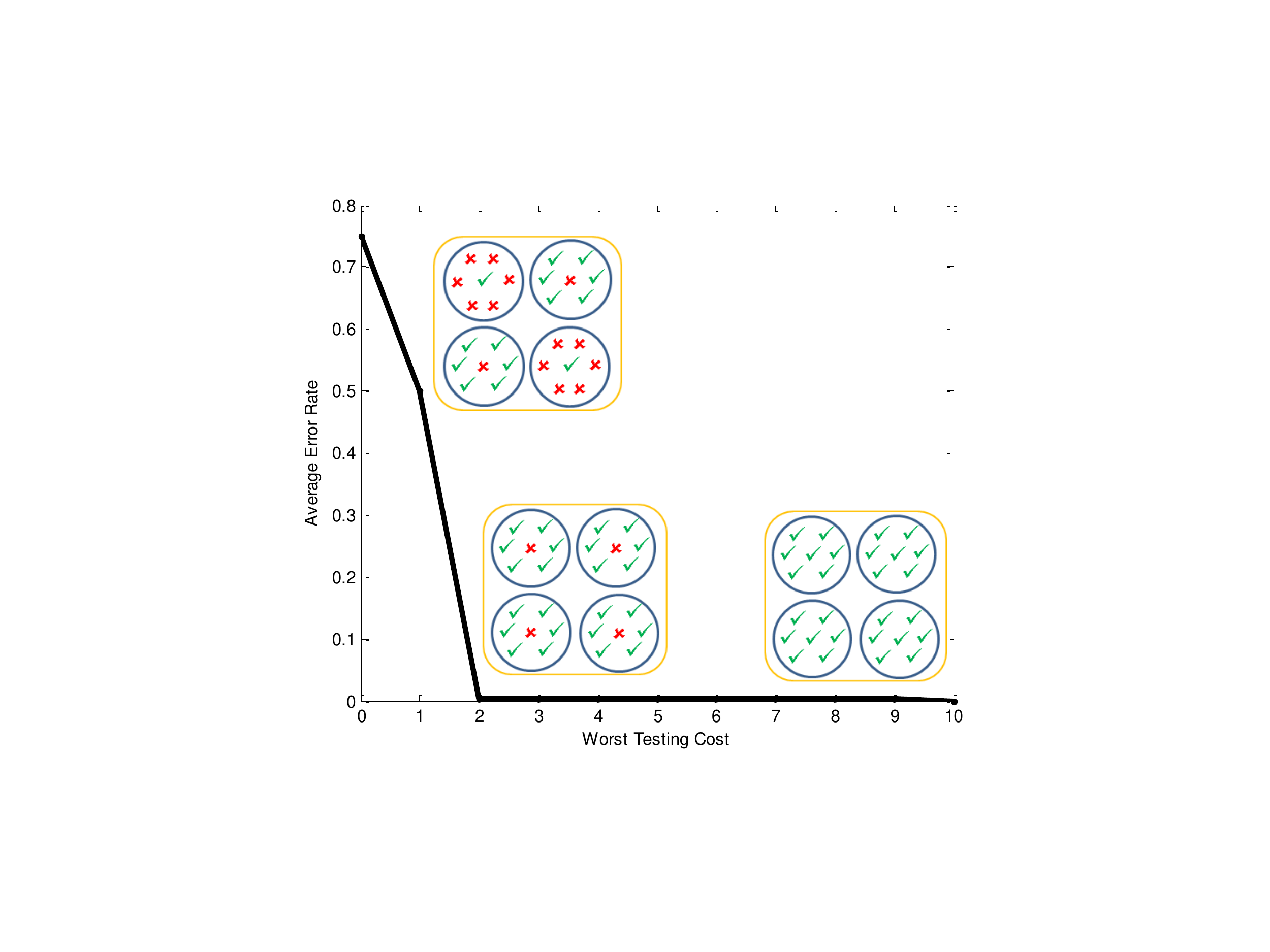}
\vspace{-.35cm}
\caption{The error-cost trade-off plot of the subroutine \textsc{GreedyTree} using threshold-Pairs on the synthetic example. $0.39\%$ error can be achieved using only a depth-2 tree but it takes a depth-10 tree to achieve zero error. } \label{fig:synthplot}
\vspace{-.5cm}
\end{figure}
Here we consider a multi-class classification example to demonstrate the effect of ``smoothened" max-cost of the tree approaching the expected-cost. Consider a data set composed of 1024 examples belonging to 4 classes with 10 binary features available. Assume that is no two examples that have the same set of feature values. Note that by fixing the acquisition order of the features, the set of feature values maps each example to an integer in the range $[0,1023]$. From this mapping, we give the examples in the ranges $[1,255]$ , $[257,511]$ , $[513,767]$, and  $[769,1023]$ the labels $1$, $2$, $3$, and $4$, respectively, and the examples $0$, $256$, $512$, and $768$ the labels $2$, $3$, $4$, and $1$, respectively (Figure \ref{fig:synth} shows the data projected to the first two features). Suppose each feature carries a unit cost. By Kraft's Inequality~\cite{cover}, the optimal max-cost in order to correctly classify every object is 10, however, using only $t_1$ and $t_2$ as selected by the greedy algorithm, leads to a correct classification of all but 4 objects, as shown in Figure \ref{fig:synthplot}. Thus, the max-cost of the early stopped tree is only 2 - much closer to the expected-cost.


\section{Experiments}\label{sec:experiments}
For establishing baseline comparisons we apply \textsc{BudgetRF} on 4 real world benchmarked datasets. The first one has varying feature acquisition costs in terms of computation time and the purpose is to show our algorithm can achieve high accuracy during prediction while saving massive amount of feature acquisition time. The other 3 datasets do not have explicit feature costs; instead, we assign a unit cost to each feature uniformly. The purpose is to demonstrate our algorithm can achieve low test error using only a small fraction of features. Note our algorithm is adaptive, meaning it acquires different features for different examples during testing. So the feature costs in the plots should be understood as an average of costs for all test examples.
We use CSTC \cite{xu2013cost} and ASTC \cite{ASTC_AAAI14} for comparison because they have been shown to have state-of-the-art cost-error performance. 
For comparison purposes we use the same configuration of training/validation/test splits as in ASTC/CSTC. The algorithm parameters for ASTC are set using the same configuration as in \cite{ASTC_AAAI14}. We report values for CSTC from \cite{ASTC_AAAI14}.
In all our experiments we use the threshold-Pairs \eqref{eq:hingedPairs} as impurity function. We use stumps as the family of classifiers $\mathcal{G}_t$ for all features t. The optimization of classifiers in line 12 of Algorithm \ref{algo:BudgetRF} is approximated by randomly generating 80, 40 and 20 stumps if the number of examples exceeds 2000, 500 and less than 500, respectively and select the best among them.  All results from our algorithm were obtained by taking an average of 10 runs and standard deviations are reported using error bars.
\begin{figure*}[htb!]
\vspace{-0.05in}
\centering
\subfigure[Yahoo! Rank]{\includegraphics[width=.39\linewidth,height=.29\linewidth]{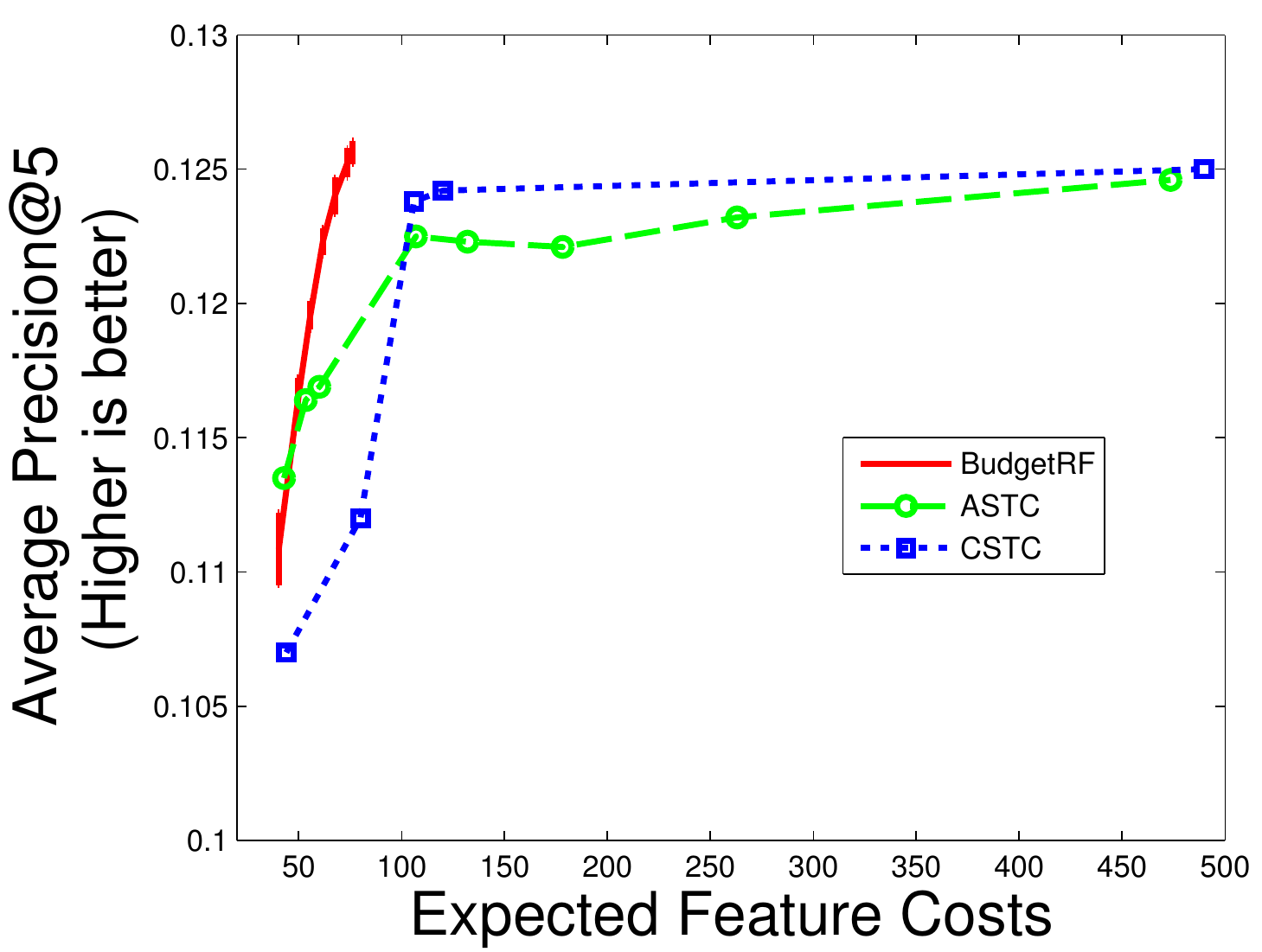}}
\subfigure[MiniBooNE]{\includegraphics[height=.29\linewidth]{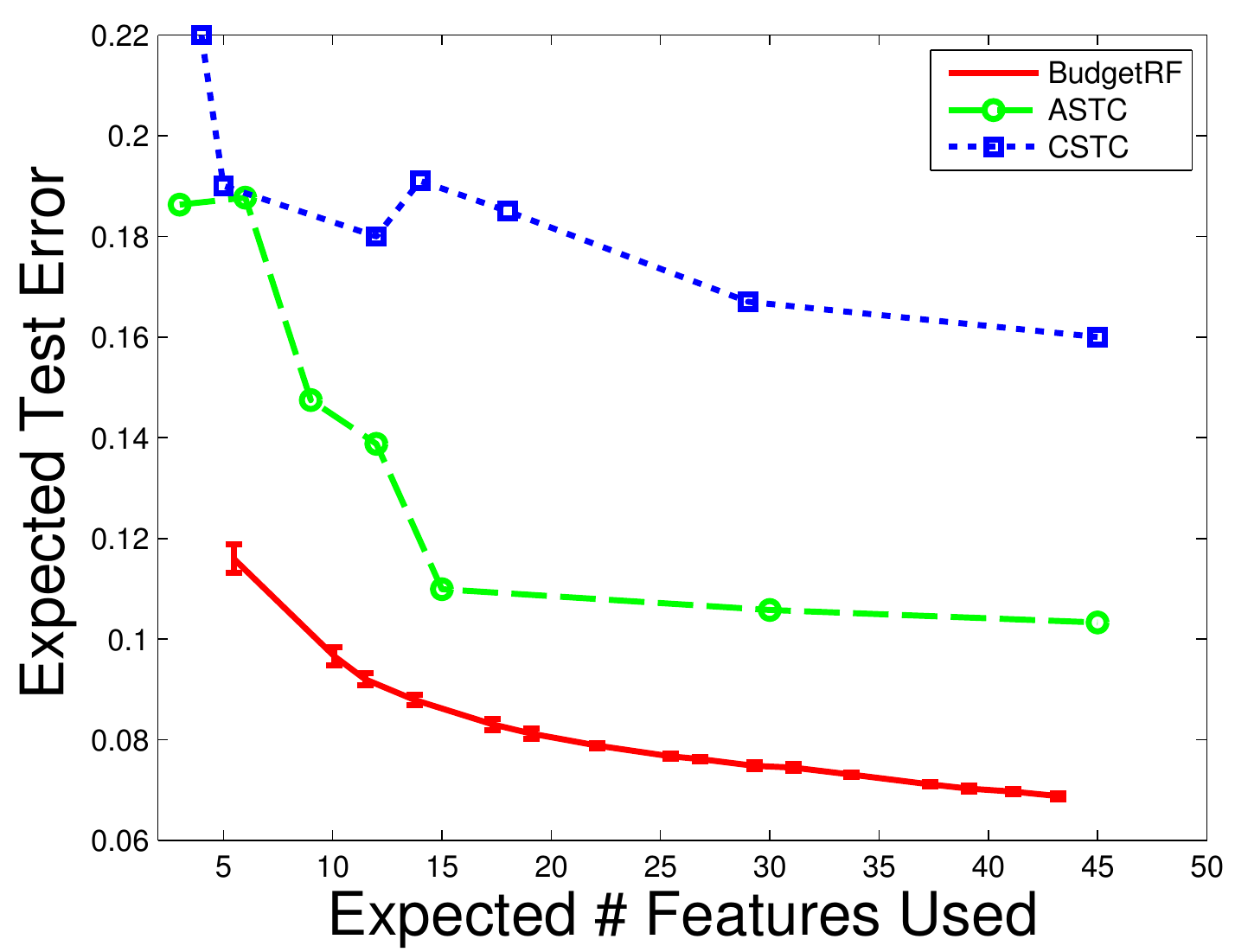}}\\
\vspace{-0.1in}
\subfigure[Forest Covertype]{\includegraphics[height=.29\linewidth]{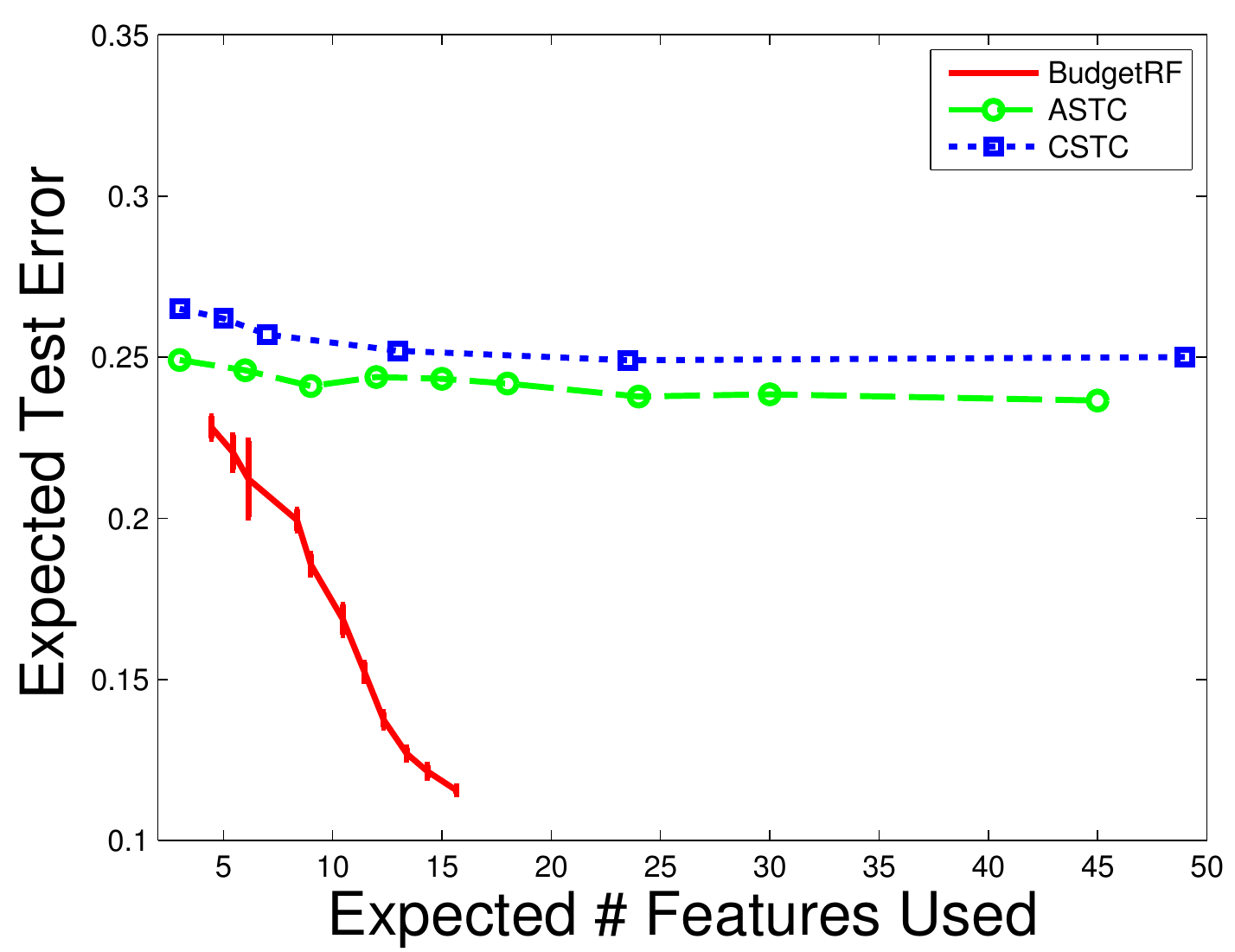}}
\subfigure[CIFAR-10]{\includegraphics[height=.29\linewidth]{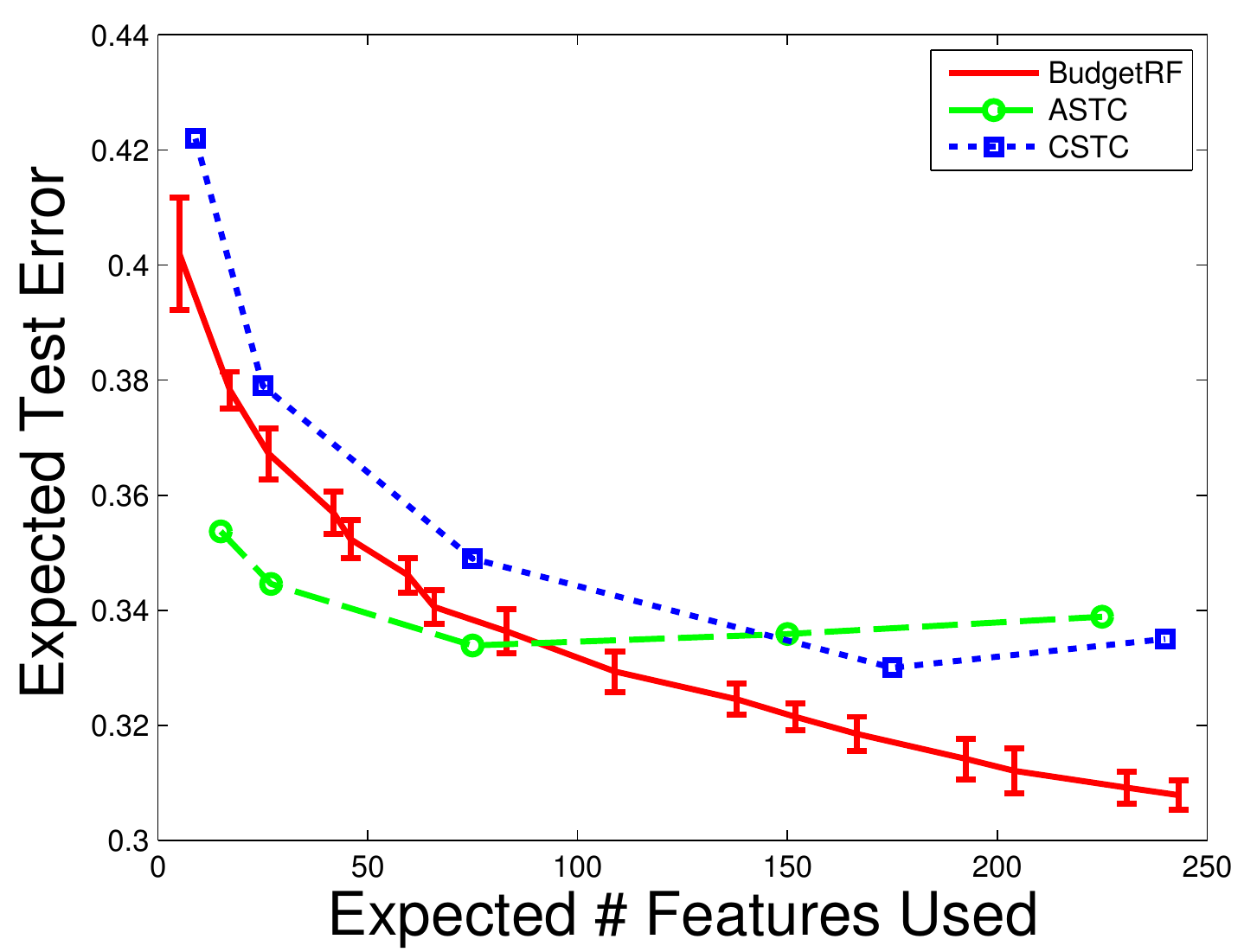}}
\vspace{-.35cm}
\caption{Comparison of \textsc{BudgetRF} against ASTC \cite{ASTC_AAAI14} and CSTC \cite{xu2013cost} on 4 real world datasets. \textsc{BudgetRF} has a clear advantage over these state-of-the-art methods as it achieves high accuracy/low error using less feature costs.}
\label{fig:experiments}
\vspace{-0.1in}
\end{figure*}

\textbf{Yahoo! Learning to Rank:} \cite{YahooChallenge2010} We evaluate \textsc{BudgetRF} on a real world budgeted learning problem: Yahoo! Learning to Rank Challenge \footnote{http://webscope.sandbox.yahoo.com/catalog.php?datatype=c}. The dataset consists of $473,134$ web documents and $19,944$ queries. Given a set of training query-document pairs together with relevance ranks of documents for each query, the Challenge is to learn an algorithm which takes a new query and its set of associated documents and outputs the rank of these documents with respect to the new query. Each example $x_i$ contains 519 features of a query-document pair. Each of these features is associated with an acquisition cost in the set $\{1, 5, 20, 50, 100, 150, 200\}$, which represents the units of time required for extraction and is provided by a Yahoo! employee. The labels are binarized so that $y_i=0$ means the document is unrelated to the query in $x_i$ whereas $y_i=1$ means the document is relevant to the query.  There are $141,397/146,769/184,968$ examples in training/validation/test sets.
We use the Average Precision@5 as performance metric, same as that used in \cite{ASTC_AAAI14}. To evaluate a predicted ranking for a test query, first sort the documents in decreasing order of the predicted ranks - that is, the more relevant documents predicted by the algorithm come before those that are deemed irrelevant. Take the top 5 documents in this order and reveal their true labels. If all of the documents are indeed relevant ($y=1$), then the precision score is increased by 1; otherwise, if the first unrelated document appears in position $1\leq j\leq 5$, increase the precision score by $\frac{j-1}{5}$. Finally, the precision score is averaged over the set of test queries. We run \textsc{BudgetRF} using the threshold $\alpha=0$ for the threshold-Pairs impurity function. To incorporate prediction confidence we simply run a given test example through the forest of trees to leaf nodes and aggregate the number of training examples at these leaf nodes for class $0$ and $1$ seperately. The ratio of class $1$ examples over the sum of class $1$ and $0$ examples gives the confidence of relevance. The comparison is shown in plot (a) of Figure \ref{fig:experiments}. The precision for \textsc{BudgetRF} rises much faster than ASTC and CSTC. At an average feature cost of 70, \textsc{BudgetRF} already exceeds the precision that ASTC/CSTC can achieve using feature cost of 450 and more. In this experiment the maximum number of trees we build is 140; the precision is set to rise even higher if we were to use more trees. \textsc{BudgetRF} thus represents a better ranking algorithm requiring much less wait time for users of the search engine.

\textbf{MiniBooNE Particle Identification Data Set:} \cite{UCI_repository}
The MiniBooNE data set is a binary classification task, with the goal of distinguishing electron neutrinos (signal) from muon neutrinos (background). Each data point consists of 50 experimental particle identification variables (features).
There are $45,523/19,510/65,031$ examples in training/validation/test sets.
We apply \textsc{BudgetRF} with a set of 10 values of $\alpha=[0,2,4,6,8,10,15,25,35,45]$. For each $\alpha$ we build a forest of maximum 40 trees using \textsc{BudgetRF}. Each point on the \textsc{BudgetRF} curve in (b) of Figure \ref{fig:experiments} corresponds to a $\alpha$ setting and the number of trees that meet the budget level.
The final $\alpha$ is chosen using validation set. Our algorithm clearly achieves lower test error than both ASTC and CSTC on every point of the budget level. Indeed, using just about 6 features on average out of 50 , \textsc{BudgetRF} achieves lower test error than what can be achieved by ASTC or CSTC using any number of features.

\textbf{Forest Covertype Data Set:} \cite{UCI_repository}
The Forest data set contains cartographic variables to predict 7 forest cover types. Each example contains 54 (10 continuous and 44 binary) features. There are $36,603/15,688/58,101$ examples in training/validation/test sets. 
We use the same $\alpha$ values as in MiniBooNE.
The final $\alpha$ is chosen using validation set.
In (c) of Figure \ref{fig:experiments}, ASTC and CSTC struggles to decrease test error even at high feature budget whereas the test error of \textsc{BudgetRF} decreases rapidly as more features are acquired. We believe this dramatic performance difference is partly due to the distinct advantage of \textsc{BudgetRF} in handling mixed continuous and discrete (categorical) data where the optimal decision function is highly non-linear.

\textbf{CIFAR-10:} \cite{CIFAR10}
CIFAR-10 data set consists of 32x32 colour images in 10 classes.
400 features for each image are extracted using technique described in \cite{ICML2011Coates_485}. 
The data are binarized by combining the first 5 classes into one class and the others into the second class. There are $19,761/8,468/10,000$ examples in training/validation/test sets. As shown in (d) of Figure \ref{fig:experiments} \textsc{BudgetRF} initially has higher test error than ASTC when the budget is low; from a budget about 90 onward \textsc{BudgetRF} outperforms ASTC while it outperforms CSTC on the entire curve. An important trend we see is that the errors for both ASTC and CSTC start to increase after some budget level. This indicates an issue of overfitting with these methods. We do not see such an issue with \textsc{BudgetRF}. 

As a general comment, we observe that in low-cost regions using higher $\alpha$ achieves lower test error whereas setting $\alpha=0$ leads to low test error at a higher cost. This is consistent with our intuition that setting a high value for $\alpha$ terminates the tree building process early and thus saves on cost, as a consequence more trees can be built within the budget. But as budget increases, more and more trees are added to the forest, the prediction power does not grow as fast as setting $\alpha$ to low values because the individual trees are not as powerful.

\textbf{Comments on standard Random Forest} Cost is not incorporated in the standard random forest (RF) algorithm. One issue that arises is how to incorporate budget constraint. Our strategy was to limit the number of trees in the RF to control the cost. But this does not work well even if the acquisition costs are uniform for all features. We implemented Matlab version of RF with the default settings on the Forest, MiniBooNE and CIFAR datasets: fraction of input data to sample with replacement from the input data for growing each new tree is 1; number of variables to select at random for each decision split is set to 8; minimum number of observations per tree leaf is 1. Compared to our \textsc{BudgetRF} algorithm using threshold-Pairs impurity with $\alpha=0$, the feature cost for RF is much higher as shown in Table \ref{table:RF}. For example in the Forest experiment, after building 10 trees, RF uses $63.04\%$ of total number of features for an average test example whereas \textsc{BudgetRF} uses only $23.21\%$. 
In terms of test error \textsc{BudgetRF} achieves $0.1364,0.0786$ and $0.3600$ for Forest, MiniBooNE and CIFAR respectively using 10 trees, quite competitive to $0.1318, 0.0803$ and $0.3594$ obtained by RF.\footnote{average over 10 repeated runs of RF and \textsc{BudgetRF}.} For Yahoo! Rank dataset, RF does even worse because some features have very high cost and yet RF still uses them just like the less expensive features, resulting in high cost.

\begin{center}
	\begin{table}
	\scalebox{0.53}{
	\begin{tabular}{| r | r | c | c | c |c | c | c |c | c | c |c|}
    \hline
     & Num. trees & 1 & 2 & 3 & 4 & 5 & 6 & 7 & 8 & 9 & 10 \\ \hline
     \multirow{2}{*}{Forest} & RF & 22.57  &  40.68  &  47.98  &  52.40  &  55.18  &  57.05  &  58.35  &  59.82   & 61.45  &  63.04  \\
     & BudgetRF & 11.37  &  15.55  &  17.96 &   19.40 &  20.47  &  21.21 &   21.85  &  22.37
     & 22.83 & 23.21 \\ \hline
     \multirow{2}{*}{MiniB} & RF & 26.86 &   42.92   & 54.59   & 63.74   & 70.24    & 75.15 &   78.84 &   81.45 &   83.27 &   85.73  \\
     & BudgetRF & 16.40 &   25.76 &  32.47  &  37.74 &  41.80 &   45.66 & 49.07 &  52.28    & 55.40 &   57.80 \\ \hline 
     \multirow{2}{*}{CIFAR} & RF & 3.86  &  7.49  &  10.98 &  14.24 &  17.38  &  20.34 &   22.84 &   25.43 &   28.02 &   30.51  \\
     & BudgetRF & 2.62 &  5.14  &  7.48  &  9.65  &  11.78  &  13.81  &  15.76 &  17.59 &   19.38  &  21.09 \\ \hline
    \end{tabular}
    }    
   	\caption[]{Percentage of average number of features used for different number of trees on Forest Covertype, MiniBooNE and CIFAR-10 datasets. BudgetRF uses much fewer features compared to RF.}
   	\label{table:RF}
    \end{table}
\end{center}

\section{Conclusion and Future Work}
We propose a novel algorithm to solve the budgeted learning problem. Our approach is to build a random forest of low cost trees with theoretical guarantees. We demonstrate that our algorithm performance far exceeds the state-of-the-art algorithms on 4 real world benchmarked datasets. While we have explored the greedy algorithm based on minimax-splits, similar algorithm can be proposed based on expected-splits. An interesting future work is to examine the theoretical and empirical properties of such algorithms.

\bibliography{cost_sensitive_bib}
\bibliographystyle{icml2015}
\onecolumn
\section*{Appendix}
\paragraph{Proof of Lemma 2.3} 
Before showing admissibility of the threshold-Pairs function in the multiclass setting, we first show  $F_\alpha(G)$ is admissible for the binary setting.
Consider the binary classification setting, let
\begin{equation*}
F_\alpha(G)=[[n^1_G-\alpha]_+[n^2_G-\alpha]_+-\alpha^2]_+.
\end{equation*}
All the properties are obviously true except supermodularity. To show supermodularity, suppose $R\subseteq G$ and object $j\notin R$. Suppose $j$ belongs to the first class. We need to show
\begin{equation}
F_\alpha(G\cup j)-F_\alpha(G)\geq F_\alpha(R\cup j) -F_\alpha(R). \label{eq:lemma5_1}
\end{equation}
Consider 3 cases:\\
(1) $F_\alpha(R)=F_\alpha(R\cup j)=0$: The right hand side of \eqref{eq:lemma5_1} is 0 and \eqref{eq:lemma5_1} holds because of monotonicity of $F_\alpha$.\\
(2) $F_\alpha(R)=0,F_\alpha(R\cup j)>0,F_\alpha(G)=0$: \eqref{eq:lemma5_1} reduces to $F_\alpha(G\cup j)\geq F_\alpha(R\cup j)$, which is true by monotonicity. \\
(3) $F_\alpha(R)=0,F_\alpha(R\cup j)>0,F_\alpha(G)>0$: Note that $F_\alpha(G)>0$ implies that $[n^1_G-\alpha]_+[n^2_G-\alpha]_+-\alpha^2>0$ which further implies $n_G^1>\alpha, n_G^2>\alpha$. Thus the left hand side is
\begin{align*}
& F_\alpha(G\cup j)-F_\alpha(G)= \\
&(n_G^1-\alpha+1)(n_G^2-\alpha)-\alpha^2-((n_G^1-\alpha)(n_G^2-\alpha)-\alpha^2)\\
&=n_G^2-\alpha.
\end{align*}
The right hand side is
\begin{align*}
 F_\alpha(R\cup j) &=(n^1_R-\alpha+1)(n^2_R-\alpha)-\alpha^2 \\
&=(n^1_R-\alpha)(n^2_R-\alpha)-\alpha^2+(n^2_R-\alpha).
\end{align*}
If $n_R^1\geq \alpha$, $ F_\alpha(R)=\max((n^1_R-\alpha)(n^2_R-\alpha)-\alpha^2,0)=0$ because $F_\alpha(R\cup j)>0$ implies $n_R^2>\alpha$. So $F_\alpha(R\cup j)\leq n^2_R-\alpha \leq n_G^2-\alpha = F_\alpha(G\cup j)-F_\alpha(G)$.\\
(4) $F_\alpha(R)>0$: We have
\begin{equation*}
F_\alpha(G\cup j)-F_\alpha(G)  = n_G^2-\alpha \geq n_R^2 - \alpha  = F_\alpha(R\cup j) -F_\alpha(R).
\end{equation*}
This completes the proof for the binary classification setting.
To generalize to the multiclass threshold-Pairs function, again, all properties are obviously true except supermodularity, which follows from the fact that each term in the sum is supermodular according to the proof for binary setting.

\paragraph{More Admissible Impurity Functions} The following polynomial impurity function is also admissible. 
\begin{lemma} \label{lemma:polynomialAdmissible}
Suppose there are $k$ classes in $G$. Any polynomial function of $n_G^1,\dots,n_G^k$ with non-negative terms such that $n_G^1,\dots,n_G^k$ do not appear as singleton terms is \emph{admissible}. Formally, if
\begin{equation}\label{eq:poly}
F(G)=\sum_{i=1}^{M} \gamma_i (n_G^1)^{p_{i1}}(n_G^2)^{p_{i2}}\dots (n_G^k)^{p_{ik}},
\end{equation}
where $\gamma_i$'s are non-negative, $p_{ij}$'s are non-negative integers and for each $i$ there exists at least 2 non-zero $p_{ij}$'s, then $F$ is \emph{admissible}.
\end{lemma}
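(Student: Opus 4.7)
The plan is to verify the five properties of admissibility in turn. Non-negativity, purity, monotonicity, and the logarithmic bound are all immediate from the structure of \eqref{eq:poly}: non-negativity follows because every $\gamma_i$ and every $n_G^j$ is non-negative; purity follows because each monomial contains at least two distinct classes as factors, so if $G$ is pure then every term has a zero factor; monotonicity follows because $R \subseteq G$ implies $n_R^j \le n_G^j$ coordinatewise, so each monomial is non-decreasing. For the log bound, if $d = \max_i \sum_j p_{ij}$ then $F(S) \le (\sum_i \gamma_i) (n')^d$, so $\log F(S) = O(\log n')$.

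The only non-trivial property is supermodularity, which will be the main obstacle. I would reduce it to the single-monomial case: since a non-negative linear combination of supermodular set functions is supermodular, it suffices to prove that each monomial $M(G) := \prod_{l=1}^k (n_G^l)^{p_l}$ satisfies $M(G\cup\{j\}) - M(G) \ge M(R\cup\{j\}) - M(R)$ whenever $R\subseteq G$ and $j\notin R$.

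Fix $j$ and let $c$ denote its class. Adding $j$ changes only the $c$-th count, by $+1$, so the marginal increment factors as
\begin{equation*}
M(G\cup\{j\}) - M(G) = \bigl[(n_G^c+1)^{p_c} - (n_G^c)^{p_c}\bigr] \cdot \prod_{l\neq c}(n_G^l)^{p_l},
\end{equation*}
with the analogous expression for $R$. If $p_c = 0$ the bracketed factor is $0$ on both sides and supermodularity holds trivially. If $p_c \ge 1$, the identity $(x+1)^{p_c} - x^{p_c} = \sum_{i=0}^{p_c-1}\binom{p_c}{i}x^i$ shows that the bracketed factor is a non-decreasing function of $x \ge 0$; since $n_G^c \ge n_R^c$, it is at least as large on the $G$ side as on the $R$ side. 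The remaining product $\prod_{l\neq c}(n_G^l)^{p_l}$ is likewise at least $\prod_{l\neq c}(n_R^l)^{p_l}$, because $n_G^l \ge n_R^l$ coordinatewise. Both factors are non-negative, so multiplying termwise yields the desired inequality.

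The potentially subtle point is the case $p_c \ge 1$ while all other $p_l = 0$, i.e., a singleton monomial $(n_G^c)^{p_c}$; this is precisely what the hypothesis ``at least two non-zero $p_{ij}$'s'' rules out and is also where purity would fail, so the assumption is used in exactly one place. Once each monomial is verified supermodular, summing with non-negative weights $\gamma_i$ completes the proof of admissibility of the full polynomial in \eqref{eq:poly}.
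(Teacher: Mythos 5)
Your proof is correct and follows essentially the same route as the paper's: both reduce supermodularity to a per-monomial statement and control the marginal increment by expanding $(x+1)^{p}-x^{p}$ as a non-negative-coefficient polynomial that is term-by-term dominated when the class counts grow. Your version is slightly more explicit in factoring the increment and in noting that the ``at least two non-zero exponents'' hypothesis is needed only for purity, but the substance is identical.
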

\begin{proof}
Properties (1),(2),(3) and (5) are obviously true. To show $F$ is supermodular, suppose $R\subset G$ and object $\hat{j} \notin R$ and $\hat{j}$ belongs to class $j$, we have
\begin{align*}
& F(R\cup \hat{j})-F(R)\\
& =\sum_{i\in I_j} \gamma_i [(n_R^1)^{p_{i1}}\dots(n_R^j+1)^{p_{ij}}\dots (n_R^k)^{p_{ik}}-\\
& \qquad (n_R^1)^{p_{i1}}\dots(n_R^j)^{p_{ij}}\dots (n_R^k)^{p_{ik}}]\\
& \leq \sum_{i\in I_j} \gamma_i [(n_G^1)^{p_{i1}}\dots(n_G^j+1)^{p_{ij}}\dots (n_G^k)^{p_{ik}}-\\
& \qquad (n_G^1)^{p_{i1}}\dots(n_G^j)^{p_{ij}}\dots (n_G^k)^{p_{ik}}]\\
&=F(G\cup \hat{j})-F(G),
\end{align*}
where the first summation index set $I_j$ is the set of terms that involve $n_R^j$. The inequality follows because $(n_R^j+1)^{p_{ij}}$ can be expanded so the negative term can be canceled, leaving a sum-of-products form for $R$, which is term-by-term dominated by that of $G$.
\end{proof}

Another family of \emph{admissible} impurity functions is the Powers function.
\begin{corollary}\label{cor:powerW}
Powers function
\begin{equation}\label{eq:powerFunc}
F(G)=(\sum_{i=1}^{k} n_G^i)^l - \sum_{i=1}^{k}(n_G^i)^l
\end{equation}
is \emph{admissible} for $l=2,3,\dots$.
\end{corollary}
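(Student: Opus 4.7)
The plan is to reduce Corollary \ref{cor:powerW} directly to Lemma \ref{lemma:polynomialAdmissible}. Lemma \ref{lemma:polynomialAdmissible} says any polynomial in $n_G^1,\dots,n_G^k$ with non-negative coefficients and no singleton term (i.e., no term of the form $\gamma(n_G^j)^{p}$ involving only one class count) is admissible. So it suffices to show that the Powers function, after the subtraction, admits such a polynomial representation.

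First, I would expand $(\sum_{i=1}^k n_G^i)^l$ by the multinomial theorem:
\begin{equation*}
\Bigl(\sum_{i=1}^k n_G^i\Bigr)^l = \sum_{\substack{a_1+\cdots+a_k=l \\ a_i\geq 0}} \binom{l}{a_1,\dots,a_k} (n_G^1)^{a_1}\cdots(n_G^k)^{a_k}.
\end{equation*}
The multinomial coefficients are positive integers, so all coefficients are non-negative. The ``singleton'' terms in this expansion, i.e., those tuples $(a_1,\dots,a_k)$ with exactly one nonzero entry, are exactly $(a_i=l, a_j=0 \text{ for } j\neq i)$ for each $i$, and each such term equals $(n_G^i)^l$ with multinomial coefficient $1$.

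Next I would subtract $\sum_{i=1}^k (n_G^i)^l$, which precisely cancels every singleton term. What remains is
\begin{equation*}
F(G) = \sum_{\substack{a_1+\cdots+a_k=l \\ \text{at least two } a_i>0}} \binom{l}{a_1,\dots,a_k} (n_G^1)^{a_1}\cdots(n_G^k)^{a_k},
\end{equation*}
which is a polynomial with non-negative coefficients and with no singleton term. By Lemma \ref{lemma:polynomialAdmissible}, $F$ is admissible.

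There is essentially no obstacle here; the only subtlety is making sure that the subtraction cancels \emph{exactly} the singleton terms and leaves coefficients non-negative (both of which are immediate from the multinomial expansion). Since Lemma \ref{lemma:polynomialAdmissible} already takes care of verifying properties (1)--(5) in the definition of admissibility for any such polynomial, no separate checking of monotonicity or supermodularity is needed.
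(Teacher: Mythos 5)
Your proof is correct and is essentially the paper's intended argument: the paper states this as an immediate corollary of Lemma \ref{lemma:polynomialAdmissible}, and your multinomial expansion showing that the subtracted sum cancels exactly the singleton terms (leaving non-negative coefficients on terms involving at least two class counts) is precisely the reduction that justifies that. No gaps.
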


We compare the threshold-Pairs with various $\alpha$ values against the Powers function to study the effect of them on the tree building subroutine \textsc{GreedyTree}. We compare performance using 9 data sets from the UCI Repository in Figure \ref{fig:real_world_tradeoff_curves}. 
We assume that all features have a uniform cost. For each data set, we replace non-unique objects with a single instance using the most common label for the objects, allowing every data set to be complete (perfectly classified by the decision trees). Additionally, continuous features are transformed to discrete features by quantizing to 10 uniformly spaced levels.
For trees with a smaller cost (and therefore lower depth), the threshold-Pairs impurity function outperforms the Powers impurity function with early stopping (higher $\alpha$ leads to earlier stopping), whereas for larger cost (and greater depth), the Powers impurity function outperforms threshold-Pairs. If $\alpha$ is set to 0, the difference between threshold-Pairs and Powers function is small.
\paragraph{Details of Data Sets}
 The house votes data set is composed of the voting records for 435 members of the U.S. House of Representatives (342 unique voting records) on 16 measures, with a goal of identifying the party of each member. The sonar data set contains 208 sonar signatures, each composed of energy levels (quantized to 10 levels) in 60 different frequency bands, with a goal of identifying  The ionosphere data set has 351 (350 unique) radar returns, each composed of 34 responses (quantized to 10 levels), with a goal of identifying if an event represents a free electron in the ionosphere. The Statlog DNA data set is composed of 3186 (3001 unique) DNA sequences with 180 features, with a goal of predicting whether the sequence represents a boundary of DNA to be spliced in or out. The Boston housing data set contains 13 attributes (quantized to 10 levels) pertaining to 506 (469 unique) different neighborhoods around Boston, with a goal of predicting which quartile the median income of the neighborhood the neighborhood falls. The soybean data set is composed of 307 examples (303 unique) composed of 34 categorical features, with a goal of predicting from among 19 diseases which is afflicting the soy bean plant. The pima data set is composed of 8 features (with continuous features quantized to 10 levels) corresponding to medical information and tests for 768 patients (753 unique feature patterns), with a goal of diagnosing diabetes. The Wisconsin breast cancer data set contains 30 features corresponding to properties of a cell nucleus for 569 samples, with a goal of identifying if the cell is malignant or benign. The mammography data set contains 6 features from mammography scans (with age quantized into 10 bins) for 830 patients, with a goal of classifying the lesions as malignant or benign.
\begin{figure*}[htb!]
\centering
\subfigure[House Votes]{\includegraphics[trim=40mm 88mm 40mm 90mm,clip,height=.23\linewidth]{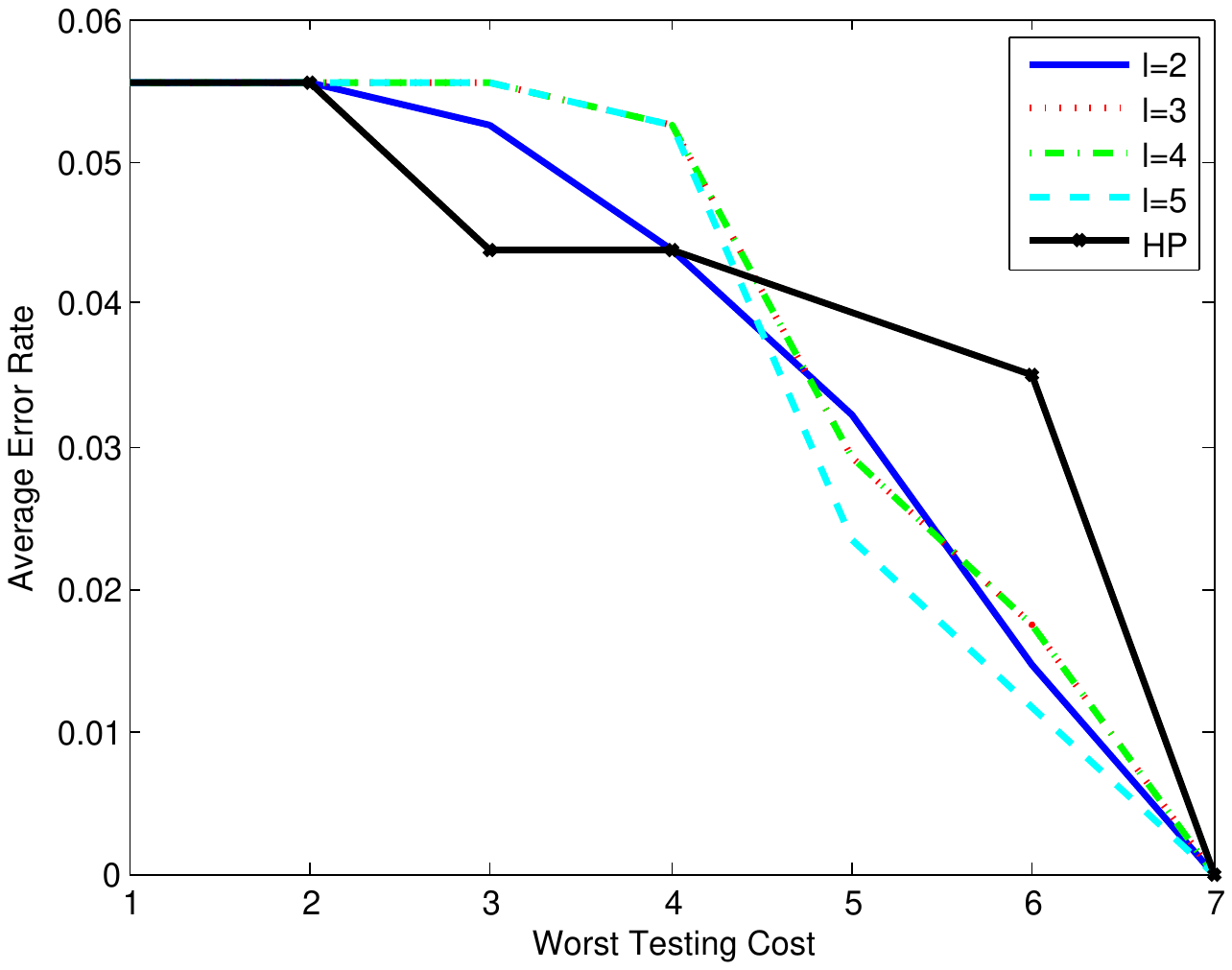}}
\subfigure[Sonar]{\includegraphics[trim=40mm 88mm 40mm 90mm,clip,height=.23\linewidth]{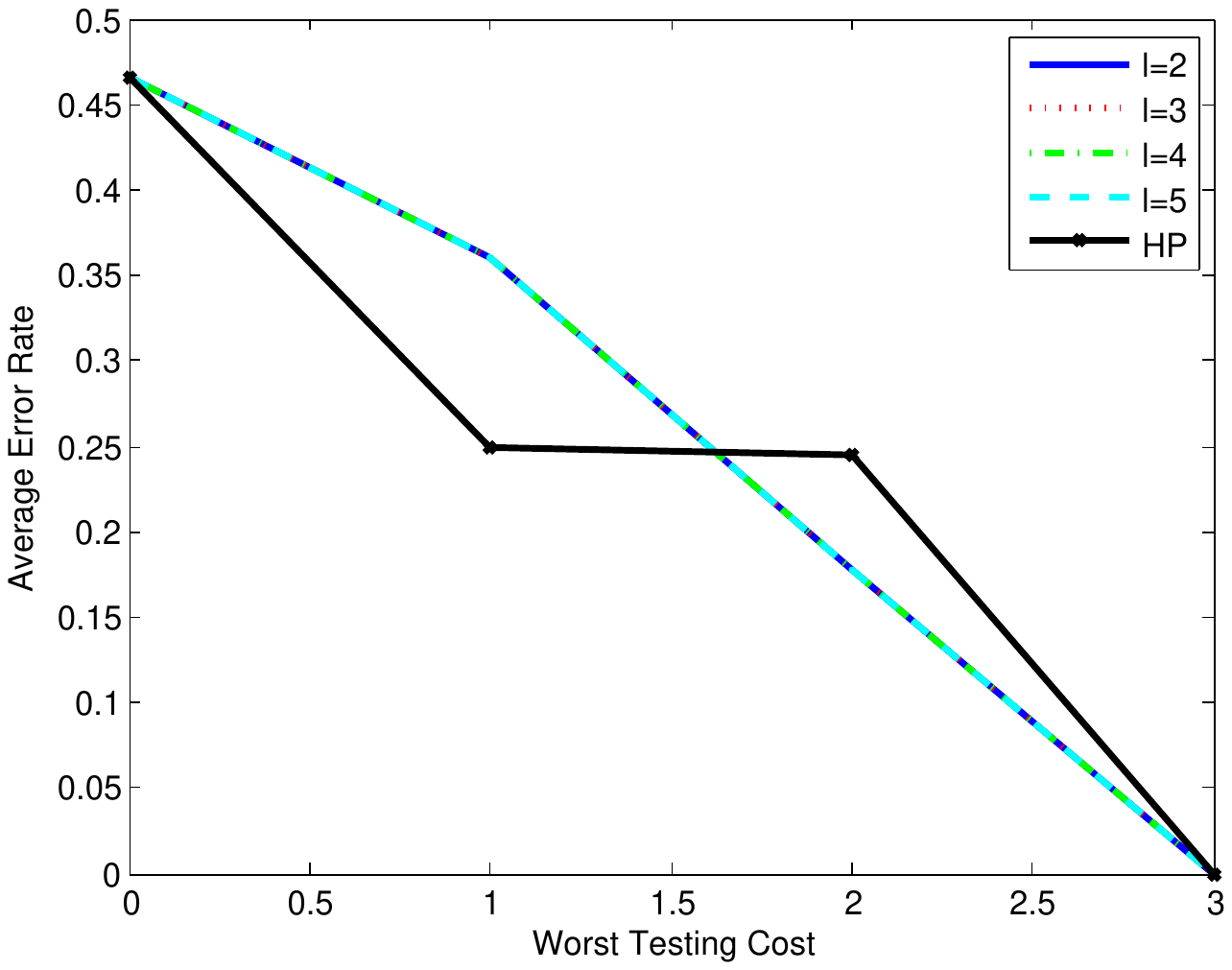}}
\subfigure[Ionosphere]{\includegraphics[trim=40mm 88mm 40mm 90mm,clip,height=.23\linewidth]{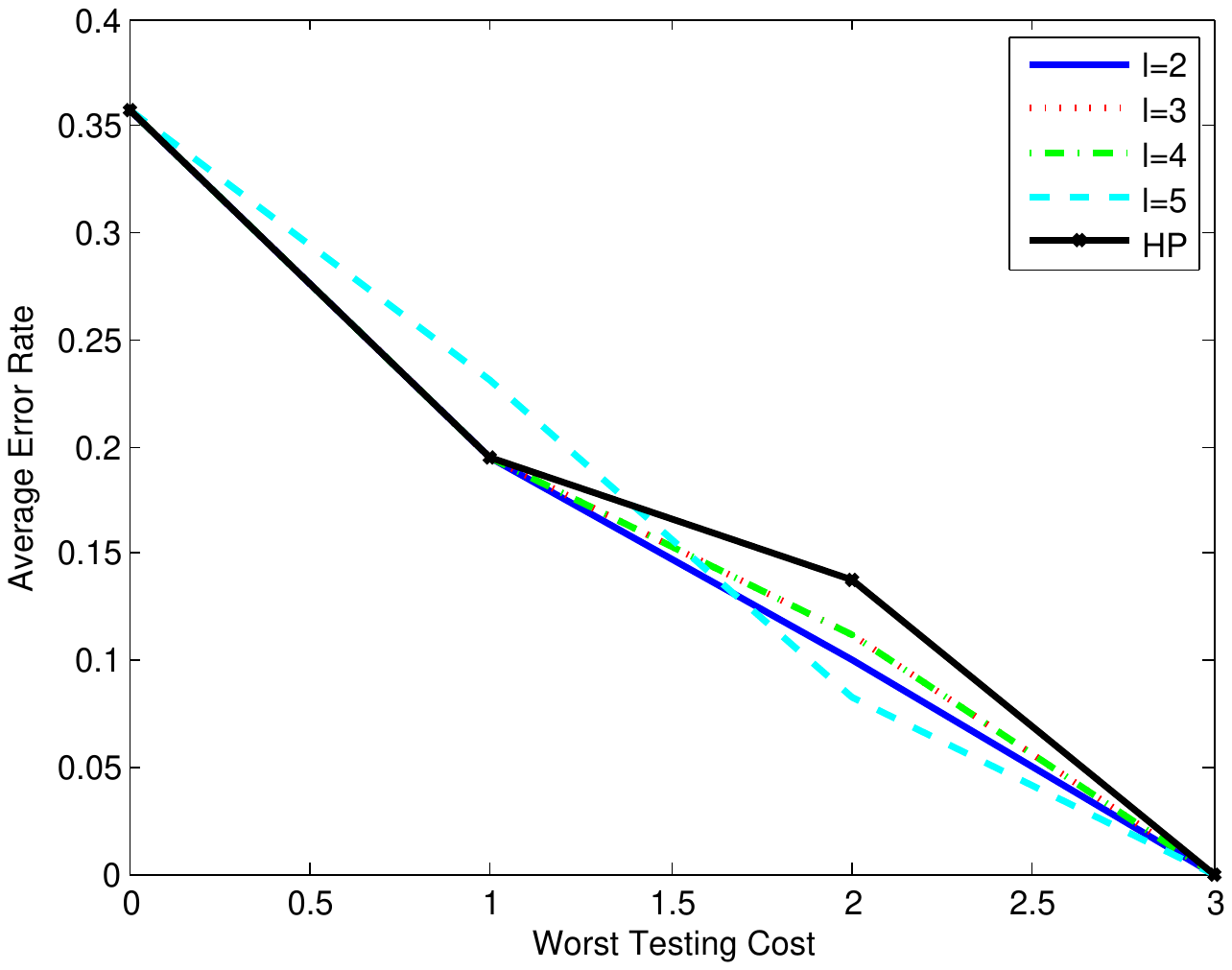}}\\[-2ex]
\subfigure[Statlog DNA]{\includegraphics[trim=40mm 88mm 40mm 90mm,clip,height=.23\linewidth]{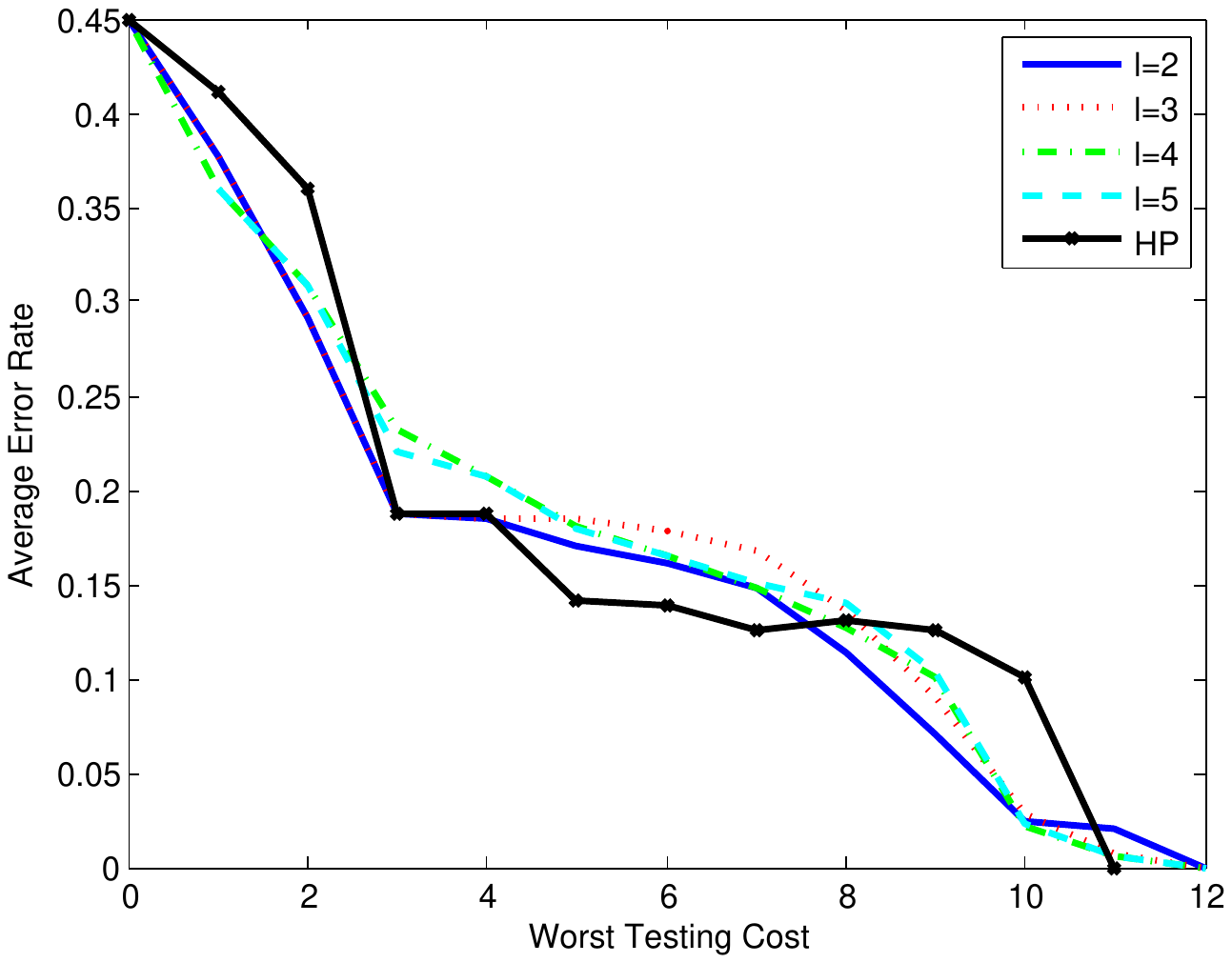}}
\subfigure[Boston Housing]{\includegraphics[trim=40mm 88mm 40mm 90mm,clip,height=.23\linewidth]{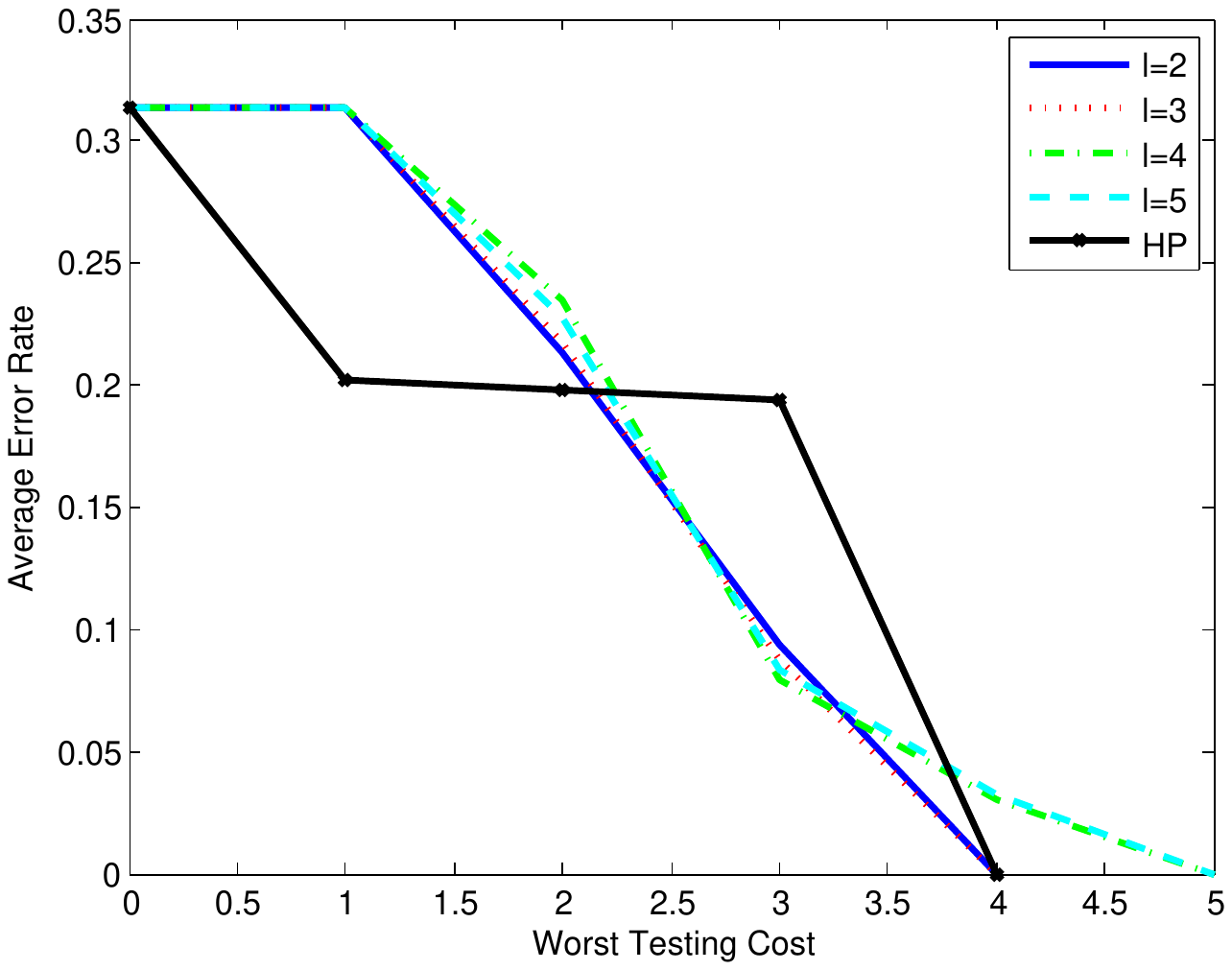}}
\subfigure[Soybean]{\includegraphics[trim=40mm 88mm 40mm 90mm,clip,height=.23\linewidth]{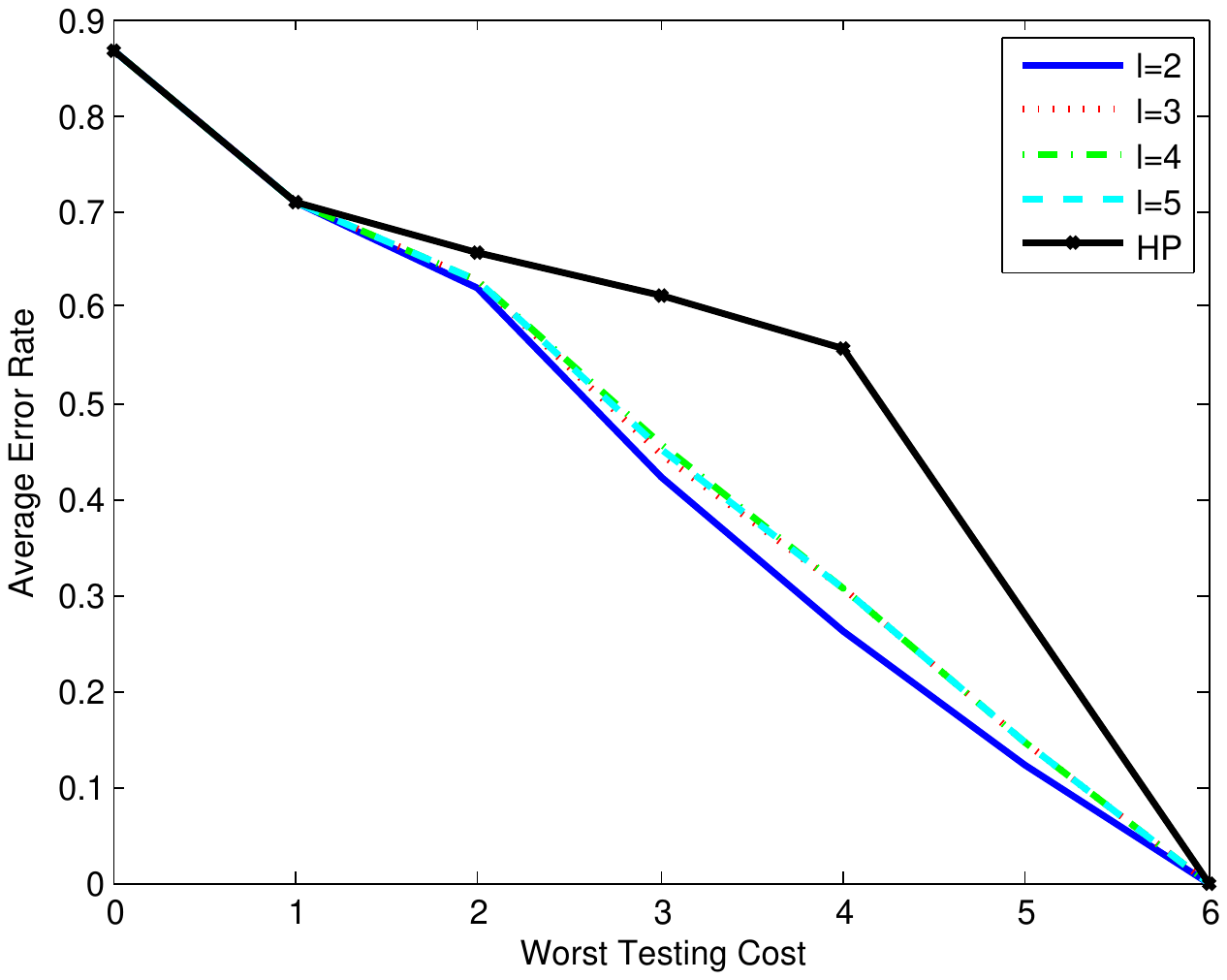}}\\[-2ex]
\subfigure[Pima]{\includegraphics[trim=40mm 88mm 40mm 90mm,clip,height=.23\linewidth]{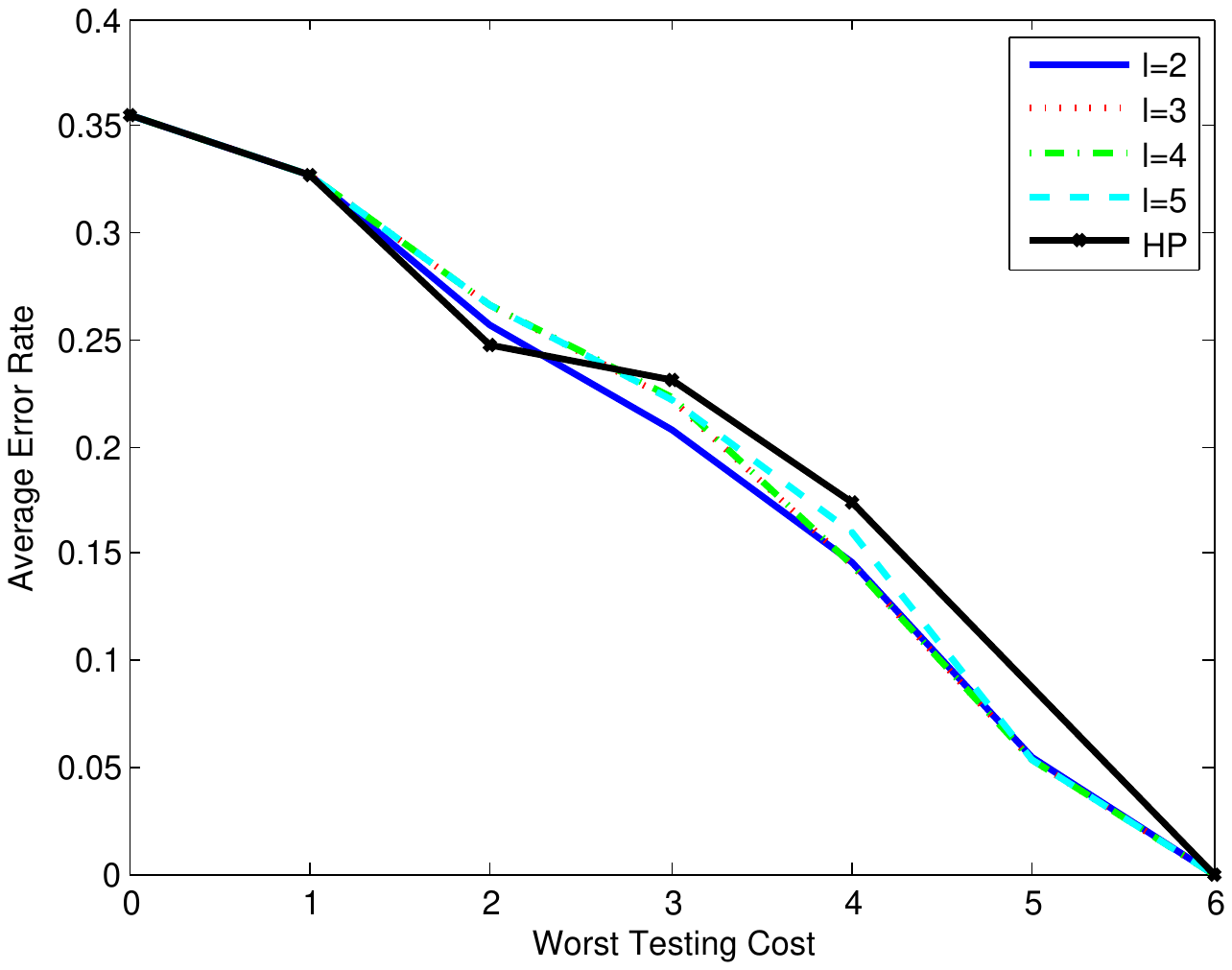}}
\subfigure[WBCD]{\includegraphics[trim=40mm 88mm 40mm 90mm,clip,height=.23\linewidth]{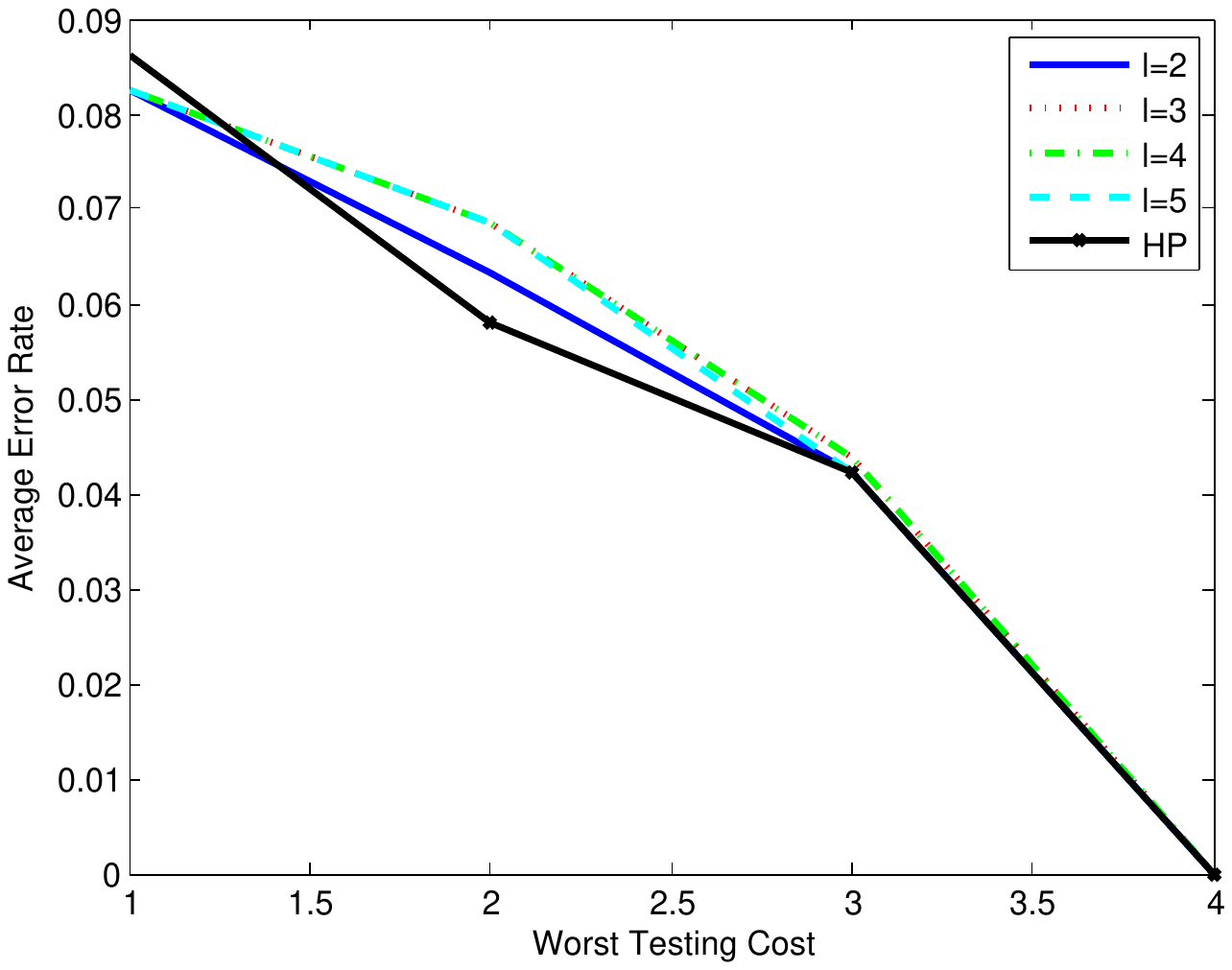}}
\subfigure[Mammography]{\includegraphics[trim=40mm 88mm 40mm 90mm,clip,height=.23\linewidth]{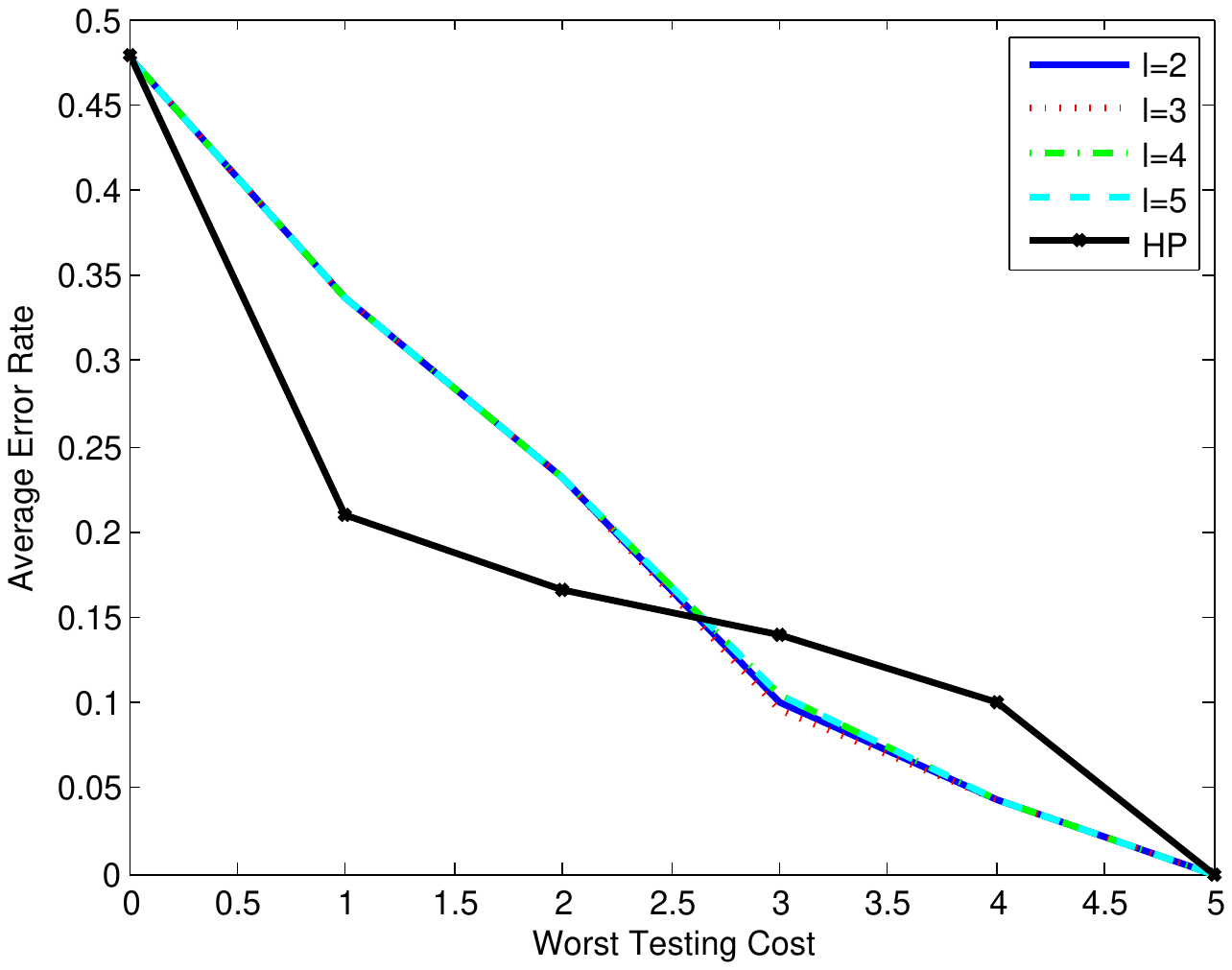}}
\vspace{-.35cm}
\caption{Comparison of classification error vs. max-cost for the Powers impurity function in \eqref{eq:powerFunc} for $l=2,3,4,5$ and the threshold-Pairs impurity function. Note that for both House Votes and WBCD, the depth $0$ tree is not included as the error decreases dramatically using a single test. In many cases, the threshold-Pairs impurity function outperforms the Powers impurity functions for trees with smaller max-costs, whereas the Powers impurity function outperforms the threshold-Pairs function for larger max-costs.}
\label{fig:real_world_tradeoff_curves}
\end{figure*}

\paragraph{Details of Computation in Figure 1}
If $\alpha=0$, we can compute impurity of each set of interest: $F_0(G)=30\times 30=900, F_0(G_{t_1}^1)=30\times 10=300, F_0(G_{t_1}^2)=0, F_0(G_{t_2}^1)=F_0(G_{t_2}^2)=15\times 15=225$; according to subroutine \textsc{GreedyTree}, we can compute $R(t_1)=\max \{\frac{1}{900-300},\frac{1}{900-0}\}=\frac{1}{600}, R(t_2)=\max \{\frac{1}{900-225},\frac{1}{900-225}=\frac{1}{675}\}$ so $t_2$ will be chosen. On the other hand, the impurities for the threshold-Pairs with $\alpha=8$ are $F_8(G)=22\times 22=484, F_8(G_{t_1}^1)=22\times 2=44, F_8(G_{t_1}^2)=0, F_8(G_{t_2}^1)=F_8(G_{t_2}^2)=7\times 7=49$; again we can compute $R(t_1)=\max \{\frac{1}{484-44},\frac{1}{484-0}\}=\frac{1}{440}, R(t_2)=\max \{\frac{1}{484-49},\frac{1}{484-49}=\frac{1}{435}\}$ so $t_1$ will be chosen. The above example shows that setting $\alpha=0$ has a stronger preference to balanced splits and may in some cases lead to poor classification result.

\end{document}